\documentclass{article}
\usepackage{arxiv}
\usepackage[round, compress]{natbib}
\usepackage{amsmath,amssymb,amsthm}
\usepackage{microtype}
\usepackage{graphicx}
\usepackage{subfigure}
\usepackage{format}
\usepackage{booktabs}
\usepackage{stmaryrd}

\newcommand{\all}{\cX}
\newcommand{\C}{\mat C}
\newcommand{\gps}{\cG_{\text{fair}}}
\newcommand{\wh}{\widehat}
\newcommand{\hmc}{\wh{\mat C}}

\newcommand{\confsh}[1]{\{\mat{C}^g[#1]\}_{g\in\gps}}

\newcommand{\hconfsh}[1]{\{\mat{\wh C}^g[#1]\}_{g\in\gps}}

\newcommand{\honfsh}[1]{\{\mat{\wh C}^g[#1]\}_{g}}

\newcommand{\name}{{\tt GroupFair}}


\def\to{{\,\rightarrow\,}}

\mathchardef\mhyphen="2D


\newcommand{\indicator}[1]{ \left\llbracket {#1} \right\rrbracket }






\newcommand{\vertiii}[1]{{\left\vert\kern-0.25ex\left\vert\kern-0.25ex\left\vert #1
		\right\vert\kern-0.25ex\right\vert\kern-0.25ex\right\vert}}

\def\sut{\text{s.t.}}


\newcommand{\vect}[1]{{\boldsymbol{#1}}}



\def\beeta{\vect{\eta}}
\def\blambda{\vect{\lambda}}



\def\ba{{\mathbf{a}}}

\def\bff{{\mathbf{f}}}

\def\bh{{\mathbf{h}}}

\def\bp{{\mathbf{p}}}

\def\bs{{\mathbf{s}}}

\def\bu{{\mathbf{u}}}
\def\bv{{\mathbf{v}}}
\def\bw{{\mathbf{w}}}
\def\bx{{\mathbf{x}}}
\def\by{{\mathbf{y}}}
\def\bz{{\mathbf{z}}}
\def\bA{{\mathbf{A}}}
\def\bB{{\mathbf{B}}}
\def\bC{{\mathbf{C}}}
\def\bD{{\mathbf{D}}}

\def\bI{{\mathbf{I}}}

\def\bL{{\mathbf{L}}}

\def\bR{{\mathbf{R}}}

\def\bU{{\mathbf{U}}}
\def\bV{{\mathbf{V}}}
\def\bW{{\mathbf{W}}}





\def\bbE{{\mathbb{E}}}

\def\bbP{{\mathbb{P}}}

\def\bbR{{\mathbb{R}}}

\def\cA{\mathcal{A}}

\def\cC{\mathcal{C}}

\def\cE{\mathcal{E}}
\def\cF{\mathcal{F}}
\def\cG{\mathcal{G}}
\def\cH{\mathcal{H}}

\def\cL{\mathcal{L}}

\def\cO{\mathcal{O}}

\def\cU{\mathcal{U}}
\def\cV{\mathcal{V}}

\def\cX{\mathcal{X}}

\def\cZ{\mathcal{Z}}




\newcommand{\mat}[1]{\mathbf{{#1}}} 

\newcommand{\conf}{C} 
\newcommand{\sConf}{\widehat{\conf}} 





\makeatletter
\newcommand*\bigcdot{\mathpalette\bigcdot@{1}}
\newcommand*\bigcdot@[2]{\mathbin{\vcenter{\hbox{\scalebox{#2}{$\m@th#1\bullet$}}}}}
\makeatother

\def\argmax{\operatorname{argmax}}
\def\argmin{\operatorname{argmin}}



\newcommand{\balgorithm}  {\begin{algorithm}}
	\newcommand{\ealgorithm}  {\end{algorithm}}
\newcommand{\balgorithmic}{\begin{algorithmic}}
	\newcommand{\ealgorithmic}{\end{algorithmic}}
\newcommand{\bassumption} {\begin{assumption}}
	\newcommand{\eassumption} {\end{assumption}}
\newcommand{\bcorollary}  {\begin{corollary}}
	\newcommand{\ecorollary}  {\end{corollary}}
\newcommand{\bdefinition} {\begin{definition}}
	\newcommand{\edefinition} {\end{definition}}
\newcommand{\bexample}    {\begin{example}}
	\newcommand{\eexample}    {\end{example}}
\newcommand{\bprop}    {\begin{prop}}
	\newcommand{\eprop}    {\end{prop}}
\newcommand{\blemma}      {\begin{lemma}}
	\newcommand{\elemma}      {\end{lemma}}
\newcommand{\bproblem}    {\begin{problem}}
	\newcommand{\eproblem}    {\end{problem}}
\newcommand{\bproof}      {\begin{proof}}
	\newcommand{\eproof}      {\end{proof}}
\newcommand{\bremark}     {\begin{remark}}
	\newcommand{\eremark}     {\end{remark}}
\newcommand{\btheorem}    {\begin{theorem}}
	\newcommand{\etheorem}    {\end{theorem}}

\usepackage{algorithm2e}
\newlength\mylen
\newcommand\myinput[1]{%
  \settowidth\mylen{\KwIn{}}%
  \setlength\hangindent{\mylen}%
  \hspace*{\mylen}#1\\}
\DeclareMathOperator{\mino}{MinOracle}
\DeclareMathOperator{\wmed}{wmedian}
\DeclareMathOperator{\update}{Update}
\DeclareMathOperator{\plugin}{plugin}

\newenvironment{customthm}[1]
  {\innercustomthm}
  {\endinnercustomthm}
  
\newenvironment{customprop}[1]
  {\innercustomprop}
  {\endinnercustomprop}

\usepackage[utf8]{inputenc} 
\usepackage[T1]{fontenc}    
\usepackage{hyperref}       
\usepackage{url}            
\usepackage{booktabs}       
\usepackage{amsfonts}       
\usepackage{nicefrac}       
\usepackage{microtype}      
\makeatletter
\renewcommand{\@algocf@capt@plain}{above}

\makeatother
\title{Fairness with Overlapping Groups}
\author{%
  Forest Yang\thanks{Work completed while an intern at Google Research Accra.} \\
  UC Berkeley\\
  \And
  Moustapha Cisse \\
  Google Research Accra \\
  \AND
  Sanmi Koyejo\\
  Google Research Accra \& Illinois\\
}

\begin{document}
\date{}
\maketitle

\begin{abstract}
In algorithmically fair prediction problems, a standard goal is to ensure the equality of fairness metrics across multiple overlapping groups simultaneously. We reconsider this standard fair classification problem using a probabilistic population analysis, which, in turn, reveals the Bayes-optimal classifier. Our approach unifies a variety of existing group-fair classification methods and enables extensions to a wide range of non-decomposable multiclass performance metrics and fairness measures. The Bayes-optimal classifier further inspires consistent procedures for algorithmically fair classification with overlapping groups. On a variety of real datasets, the proposed approach outperforms baselines in terms of its fairness-performance tradeoff.
\end{abstract}

\section{Introduction}
\label{section:intro}

Machine learning inform an increasingly large number of critical decisions in diverse settings. They assist medical diagnosis~\citep{mckinney2020international}, guide policing~\citep{meijer2019predictive}, and power credit scoring systems~\citep{tsai2008using}. While they have demonstrated their value in many sectors, they are prone to unwanted biases, leading to discrimination against protected subgroups within the population. For example, recent studies have revealed biases in predictive policing and criminal sentencing systems~\citep{meijer2019predictive,Chouldechova17}. The blossoming body of research in algorithmic fairness aims to study and address this issue by introducing novel algorithms guaranteeing a certain level of non-discrimination in the predictions. Each such algorithm relies on a specific definition of fairness, which falls into one of two categories: Individual fairness~\citep{Dwork2012,Zemel13} or group fairness~\citep{Calders10, Kamishima11, Hardt16}. The vast majority of the algorithmic group fairness literature has focused on the simplest case where there are only two groups. In this paper, we consider the more nuanced case of group fairness with respect to multiple groups. 

The simplest setting is the {\em independent} case, with only one sensitive attribute which can take multiple values, e.g., race only. The presence of multiple sensitive attributes (e.g., race {\em and} gender simultaneously) leads to non-equivalent definitions of group fairness. On the one hand, fairness can be considered independently per sensitive attribute, leading to overlapping subgroups. For example, consider a model restricted to demographic parity between subgroups defined by ethnicity. Simultaneously, the model can be constrained to fulfill demographic parity between subgroups defined by gender. We term fairness in this situation \textit{independent group fairness}. On the other hand, one can consider all subgroups defined by intersections of sensitive attributes (e.g., ethnicity and gender), leading to \emph{intersectional group fairness}. 
A given algorithm can be \textit{independently group fair}, e.g., when considering race and gender in isolation, but not \textit{intersectionally group fair}, e.g., when considering intersections of racial and gender groups. For example, \citet{Buolamwini18}, showed how facial recognition software had a particularly poor performance for black women. 
This phenomenon, called \emph{fairness gerrymandering}, has been studied by~\citet{Kearns18}. Intersectional fairness is often considered ideal. However, it comes with major statistical and computational hurdles such as data scarcity at intersections of minority groups, and the potentially exponential number of subgroups. Indeed, current algorithms consist of either brute force enumeration or searching via a cost-sensitive classification problem, and intersectional groups are often empty with finite samples~\citep{Kearns18}. 
On the other hand, independent group fairness still provides a broad measure of fairness and is much easier to enforce.

We seek to { {\em design unifying statistically consistent strategies for group fairness and to clarify the relationship between the existing definitions.}}
Our main results and algorithms apply to arbitrary overlapping group definitions. 
Our contributions are summarized in the following.
\begin{itemize}
\item {\bf Probabiistic results}. We characterize the population optimal (also known as the Bayes-optimal) prediction procedure for multiclass classification, where all the metrics are general linear functions of the confusion matrix. We consider both overlapping (independent, gerrymandering) and non-overlapping (unrestricted, intersectional) group fairness.
\item {\bf Algorithms and statistical results.} Inspired by the population optimal, we propose simple plugin and weighted empirical risk minimization (ERM) approaches for algorithmically fair classification, and prove their consistency, i.e., the empirical estimator converges to the population optimal with sufficiently large samples. Our general approach recovers existing results for plugin and weighted ERM group-fair classifiers.
\item {\bf Comparisons.} We compare independent group fairness to the overlapping case. We show that
    intersectional fairness implies overlapping group fairness under weak conditions. However, the converse is not true, i.e., overlapping fairness may not imply intersectional fairness. This result formalizes existing observations on the dangers of gerrymandering.
\item {\bf Evaluation.}
Empirical results are provided to highlight our theoretical claims.
\end{itemize}
Taken together, our results unify and advance the state of the art with respect to the probabilistic, statistical, and algorithmic understanding of group-fair classification. The generality of our approach gives significant flexibility to the algorithm designer when constructing algorithmically-fair learners. 

\section{Problem Setup and Notation}
\label{section:framework}

Throughout the paper, we use uppercased bold letters to represent matrices, and lowercased bold letters to represent vectors. Let $e_i$ represent the $i$th standard basis whose $i$th dimension is 1 and 0 otherwise $e_i=(0,\cdots,1,\cdots,0)$. We denote $\vec 1$ as the all-ones vector with dimension inferred from context. Given two matrices $\bA,\bB$ of same dimension, $\ip{\bA,\bB} = \sum_{i,j} a_{ij}b_{ij}$ is the Frobenius inner product. For any quantity $q$, $\hat q$ denotes an empirical estimate. Due to limited space, proofs are presented in the appendix.

{\bf Group notation.}
We assume $M$ sensitive attributes, where each attribute is indicated by a group $\{\cA_m\}_{m \in [M]}$. For example, $\cA_1$ may correspond to race, $\cA_2$ may correspond to gender, and so on. Combined, the sensitive group indicator is represented by a $M$-dimensional vector $\ba \in \cA = \cA_1 \times \cA_2 \times \cdots \cA_M$. In other words, each instance is associated with $M$ subgroups simultaneously. 

{\bf Probabilistic notation.}
Consider the multiclass classification problem where $\cZ$ denotes the instance space and $\mathcal{Y} = \left[K\right]$ denotes the output space with $K$ classes. We assume the instances, outputs and groups are samples from a probability distribution $\mathbb{P}$ over the domain $\mathcal{Y}\times\mathcal{Z}\times\cA$. A dataset is given by $n$ samples $(y^{(i)}, z^{(i)}, a^{(i)}) \overset{\text{i.i.d}}{\sim} \mathbb{P}, i\in[n]$.
To simplify notation, let $\cX = \cZ \times \cA$, so $\bx = (\bz,\ba)$.
Define the set of randomized classifiers $\mathcal{H}_r=\{\bh:  \mathcal{X} \times \cA \to (\Delta^K) \}$, where $\Delta^q = \set{\mathbf{p}\in [0,1]^q: \sum_{i=1}^q p_i = 1}$ is the $q-1$ dimensional probability simplex. A classifier $\vect h$ is associated with the random variable $h\in [K]$ defined by $\P(h=k|\bx) = h_k(\bx)$. If $\vect h$ is deterministic, then we can write $\vect h(\bx) = e_{h(\bx)}$.

{\em Confusion matrices.}
For any multiclass classifier, let $\vect{\eta}(\bx) \in \Delta^{K}$ denote the class probabilities for any given instance $\bx$ and sensitive attribute $\ba$, whose $k$th element is the conditional probability of the output belonging to class $k$, i.e., $\eta_k(\bx) = \P(Y = k \mid X = \bx)$. The population confusion matrix is ${\bC}\in [0,1]^{K\times K}$, with elements defined for $k,\ell \in[K]$ as ${\bC}_{k,\ell} = \P(Y=k, h=\ell)$, or equivalently, 
\begin{align*}
{\bC}_{k,\ell} = \int_{\bx} \vect{\eta}_k(\bx)h_\ell(\bx)\,d\P(\bx). 
\label{eq:defC}
\end{align*}

{\em Group-specific confusion matrices.} Let $\cG$ represent a set of subsets of the instances, i.e., potentially overlapping partitions of the instances $\cX$. We leave $\cG$ as generic for now, and will specify cases specific to fairness in the following. Given any group $g \in \cG$, we can define the group-specific confusion matrix ${\bC^g}\in [0,1]^{K\times K}$, with elements defined for $k,\ell\in[K]$, where  
\begin{align*}
{\bC}^g_{k,\ell} = \int_{\bx} \vect{\eta}_k(\bx)h_{\ell}(\bx)\,d\P(\bx|\bx\in g).
\end{align*}
We will abbreviate the event $\{\bx \in g\}$ to simply $g$ when it is clear from context.
Let $\pi_{g} = \P(X\in g)$ be the probability of group $g$. It is clear that when the groups $\cG$ form a partition, i.e., $a \cap b = \emptyset \, \forall a, b \in \cG$ and $\bigcup_{g\in\cG} g = \cX$, the population confusion may be recovered by a weighted average of group confusions, $\bC = \sum_{g \in \cG} \pi_{g} \bC^{g}.$ 
Let $\omega_{k} = \P(Y=k) = \sum_{\ell} {\bC}_{k,\ell} $ be the probability of label $k$, and $\omega_{k}^g = \P(Y=k | X\in g) = \sum_{\ell} {\bC}^g_{k,\ell} $ be the probability of label $k$ given group $g$. 

{\bf The sample confusion matrix}
is defined as $\mat{\sConf}[\vect{h}] = \frac{1}{n} \sum_{i=1}^n \mat{\sConf}^{(i)}[\vect{h}]$, where
$\mat{\sConf}^{(i)}[\vect{h}] \in [0, 1 ]^{K\times K}$, and $\sConf_{k,\ell}^{(i)}[\vect{h}] = \indicator{y_i=k}h_\ell(\bx_i)$. Here, $\indicator{\cdot}$ is the indicator function, so $\sum_{k=1}^{K}\sum_{\ell =1}^{K}\sConf_{k,\ell}^{(i)}[\vect{h}]=1$.
{\em The empirical group-specific confusion matrices} $\wh{\mat C}^g$ are computed by conditioning on groups. In the empirical case, it is convenient to represent group memberships via indices alone, i.e., $\bx_i \in g$ as $i \in g$. 
We have $\mat{\sConf}^g[\vect{h}] = \frac{1}{|g|} \sum_{i\in g} \mat{\sConf}^{(i)}[\vect{h}]$.

{\bf Fairness constraints.}
Let $\gps$ represent the (potentially overlapping) set of groups across which we wish to enforce fairness. The following states our formal assumptions on $\gps$.
\begin{assumption}
$\gps$ is a function of the sensitive attributes $\cA$ only.
\label{ass:gps}
\end{assumption}
We will focus the discussion on common cases in the literature. These include non-overlapping (unrestricted, intersectional), and overlapping (independent, gerrymandering) group partitions.
\begin{itemize}
\item {\em Unrestricted case.} The simplest case is where the group is defined by a single sensitive attribute (when there are multiple sensitive attributes, all but one are ignored). These have been the primary settings addressed by past literature \citep{Hardt16, Narasimhan18, agarwal18}. Thus for some fixed $i \in [M]$, $g_{j} =\{(\bz, \ba)|a_i = j \}$, so $|\cG_\text{unrestricted}| = |A_i|$. In the special case of binary sensitive attributes, $|\cG_\text{unrestricted}| = 2$.
\item {\em Intersectional groups}. Here, the non-overlapping groups are associated with all possible combinations of sensitive features. Thus $g_\ba =\{(\bz, \ba')|\ba'=\ba\} \, \forall \ba \in \cA$ so $|\cG_\text{intersectional}| = \prod_{m \in M}|A_m|$. In the special case of binary sensitive attributes, $|\cG_\text{intersectional}| = 2^M$.
\item {\em Independent groups}. Here, the groups are overlapping, with a set of groups associated with each fairness attribute separately. It is convenient to denote the groups based on indices representing each attribute, and each potential setting. Thus $g_{i,j} =\{(\bz, \ba)|a_i = j \}$, so $|\cG_\text{independent}| = \sum_{m \in M}|A_m|$. In the special case of binary sensitive attributes, $|\cG_\text{independent}| = 2M$.
\item {\em Gerrymandering intersectional groups}. Here, group intersections are defined by any subset of the sensitive attributes, leading to overlapping subgroups. $\cG_\text{gerrymandering} = \{\{(\bz, \ba): \ba_I=\bs\}: I\subset[M],\, \bs \in \cA_I\}$  
where $\ba_I$ denotes $\ba$ restricted to the entries indexed by $I$. It is also the closure of $\cG_\text{independent}$ under intersection. As a result, $\cG_\text{intersectional} \subseteq \cG_\text{gerrymandering}$, and $\cG_\text{independent} \subseteq \cG_\text{gerrymandering}$. In the special case of binary sensitive attributes, $|\cG_\text{gerrymandering}| = 3^M$. 
\end{itemize}

{\bf Fairness metrics.}
We formulate group fairness by upper bounding a fairness violation function $\cV: \mathcal{H} \mapsto \mathbb{R}^J$ which can be represented as a linear function of the confusion matrices, i.e. $\cV(\vect{h})= \Phi(\mat C[{\vect h}], \confsh{\vect h})$ where $\forall j\in [J],\; \cV(\vect{h})_j =  \phi_j(\mat C[{\vect h}], \confsh{\vect h}) = \ip{\bU_j, \C} - \sum_{g\in\gps}   \ip{\bV_j^g, \C^g}$. This formulation is sufficiently flexible to include the fairness statistics we are aware of in common use as special cases. For example, demographic parity for binary classifiers \citep{Dwork2012} can be defined by fixing $\mat{C}_{0,0}^g+\mat{C}_{1,1}^g$ across groups. Equal opportunity \citep{Hardt2016} is recovered by fixing the group-specific true positives, using population specific weights, i.e.,
\begin{equation*}
\label{eq:dp}
\phi_\text{DP}^{\pm} = \pm (\mat{C}_{0,0}^g+\mat{C}_{1,1}^g
- \mat{C}_{0,0}+\mat{C}_{1,1})-\nu,
 \quad
\phi_\text{EO}^{\pm} =  \pm\left(\frac{1}{\omega_1^g}\mat{C}_{1,1}^g
- \frac{1}{\omega_1}\mat{C}_{1,1}\right)-\nu,
\end{equation*}
using both a positive and negative constraint to penalize both positive and negative deviations between the group and the population, and relaxation $\nu$. 

\paragraph{Performance metrics.}
We consider an error metric $\cE: \mathcal{H} \mapsto \mathbb{R}_+$ that is a linear function of the population confusion $\cE(\mathbf{h}) = \psi(\bC) = \langle \bD, \bC[{\vect h}] \rangle$.  This setting has been studied in binary classification~\citep{pmlr-v80-yan18b}, multiclass classification~\citep{narasimhan2015consistent},  multilabel classification~\citep{koyejo2015consistent}, and multioutput classification~\citep{wang2019consistent}. For instance, standard classification error corresponds to setting $\bD = 1- \bI$.
The goal is to learn the Bayes-optimal classifier with respect to the given metric, which, when it exists, is given by:
\begin{equation}
\vect{h}^* \in \argmin_{\vect{h}}\; \cE(\vect{h}) \;  \sut \; \cV(\vect{h}) \le \mathbf{0}.
\label{eq:Bayes}
\end{equation}

We denote the optimal error as $\cE^* = \cE(\vect{h}^*)$. We say a classifier $\vect{h}_N$ constructed using finite data of size $N$ is $\{\cE, \cV\}$-consistent if $ \cE(\vect{h}_n)\xrightarrow{ \mathbb{P} }\cE^*$ and $\cV(\vect h_n) \xrightarrow{\P} \mathbf{0}$, as $n \to \infty$. We also consider empirical versions of error $\hat\cE(\vect h) = \psi(\hmc[{\vect h}])$ and fairness violation $\wh\cV(\vect h) = \Phi(\hmc[{\vect h}\, \honfsh{\vect h})$.

\begin{table}[h]
\caption{Examples of multiclass performance metrics and fairness metrics studied in this manuscript.}
\label{table-metrics}
    \begin{center}
    \begin{tabular}{llll}
    \toprule
    {Metric} & $\psi(\mat{\conf})$ & {Fairness Metric} & $\phi(\mat{\conf}, \{ {\mat{\conf}}^g \}_g )$ \\\midrule
    Weighted Acc. & $\sum_{i=1}^{K}\sum_{j=1}^{K}b_{i,j}\conf_{i,j}$ &
    Demographic Parity & 
    $(\mat{C}_{0,0}^g+\mat{C}_{1,1}^g - \mat{C}_{0,0}+\mat{C}_{1,1})-\nu$
    \\[5pt]
    Ordinal Acc. & 
    $\sum_{i=1}^{K}\sum_{j=1}^{K}(1-\frac{1}{K-1} |i-j|)\conf_{i,j}$ 
    & 
    Equalized Opportunity &
    $\left(\frac{1}{\omega_1^g}\mat{C}_{1,1}^g - \frac{1}{\omega^g}\mat{C}_{1,1}\right)-\nu$
    \\[5pt]
    \bottomrule
    \end{tabular}
    \end{center}
\end{table}

\section{Bayes-Optimal Classifiers}
\label{sec:bayesopt}

In this section, we identify a parametric form for the Bayes-optimal group-fair classifier under standard assumptions. To begin, we introduce the following general assumption on the joint distribution.
\begin{assumption}[$\eta$-continuity]
\label{assumption1}
Assume $\mathbb{P}(\{ \vect{\eta}(\bx) = \mathbf{c} \}) = 0 \; \forall \mathbf{c} \in \Delta^K.$
Furthermore, let $Q= \vect{\eta}(\bx)$ be a random variable with density $p_{\eta}(Q)$, where $p_{\eta}(Q)$ is absolutely continuous with respect to the Lebesgue measure restricted to $\Delta^K$.
\label{ass:data}
\end{assumption}
This assumption imposes that the conditional probability as a random variable has a well-defined density. Analogous regularity assumptions are widely employed in literature on designing well-defined complex classification metrics and seem to be unavoidable (we refer interested reader to~\citet{pmlr-v80-yan18b,narasimhan2015consistent} for details).
Next, we define the general form of weighted multiclass classifiers, which are the Bayes-optimal classifiers for linear metrics.
\begin{definition}
\label{def:Min-Form}[\citet{narasimhan2015consistent}]
Given a loss matrix $\mat{W}\in \bR^{K\times K}$, a weighted classifier $\bh$ satisfies $h_i(\bx)>0$ only if $i \in \arg\min_{k\in[K]} \ip{\mat{W}_{k}, \mat{\eta(\bx)} }$.
\end{definition}

Next we present our first main result identifying the Bayes-optimal group-fair classifier.
\begin{theorem}
\label{thrm:Min-Form}
Under Assumption~\ref{ass:gps} and Assumption~\ref{ass:data}, if \eqref{eq:Bayes} is feasible (i.e., a solution exists), the Bayes-optimal classifier is given by $\mathbf{h}^*(\bx) = \mathbf{h}^*(\bz, \ba) = \beta_{\ba}\mathbf{h}_1(\bx) + (1-\beta_{\ba})\mathbf{h}_2(\bx),$ where $\beta_{\ba} \in (0,1), \forall \ba \in \cA$ and $\mathbf{h}_i(\bx)$ are weighted classifiers with weights $\{\{ \bW_{i, \ba} \}_{i \in \{1, 2\}}\}_{\ba \in \cA}$.
\label{thm:max_over_eta}
\end{theorem}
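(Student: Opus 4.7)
The plan is to attack \eqref{eq:Bayes} via Lagrangian duality, exploit the per-$\ba$ decomposition of the Lagrangian, and then use $\eta$-continuity together with a small mixing argument to get the claimed form.

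\textbf{Step 1 (duality setup).} Introduce non-negative multipliers $\blambda \in \mathbb{R}_+^J$ for the $J$ fairness constraints and form
\[
L(\bh,\blambda) \;=\; \cE(\bh) + \blambda^\top\cV(\bh)
\;=\; \langle \bD,\bC[\bh]\rangle + \sum_{j=1}^J \lambda_j\Big(\langle \bU_j,\bC[\bh]\rangle - \sum_{g\in\gps}\langle \bV_j^g,\bC^g[\bh]\rangle\Big).
\]
The feasible set $\mathcal{H}_r$ is convex and $L$ is linear in $\bh$ (via linearity of the confusions), so a Sion-type minimax theorem, combined with feasibility of \eqref{eq:Bayes}, yields strong duality and the existence of a saddle point $(\bh^\ast,\blambda^\ast)$. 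In particular, $\bh^\ast$ minimizes $L(\cdot,\blambda^\ast)$ and $\blambda^{\ast\top}\cV(\bh^\ast)=0$.

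\textbf{Step 2 (pointwise weighted-classifier form).} Substituting $\bC_{k,\ell}[\bh]=\int \eta_k(\bx)h_\ell(\bx)\,d\P(\bx)$ and $\bC^g_{k,\ell}[\bh]=\tfrac{1}{\pi_g}\int_g \eta_k(\bx)h_\ell(\bx)\,d\P(\bx)$ into $L$ gives
\[
L(\bh,\blambda^\ast) \;=\; \int \bigl\langle \bW(\ba,\blambda^\ast)^\top\eta(\bx),\,\bh(\bx)\bigr\rangle\,d\P(\bx),\qquad
\bW(\ba,\blambda^\ast)\;=\;\bD+\sum_{j}\lambda^\ast_j\bU_j-\sum_{j}\lambda^\ast_j\!\!\sum_{g:\bx\in g}\frac{1}{\pi_g}\bV_j^g,
\]
where by Assumption~\ref{ass:gps} the set $\{g:\bx\in g\}$ depends on $\bx$ only through $\ba$, so $\bW$ is a function of $(\ba,\blambda^\ast)$. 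Thus the inner product is minimized pointwise by placing mass only on $\arg\min_{k}[\bW(\ba,\blambda^\ast)^\top\eta(\bx)]_k$, i.e.\ by a weighted classifier in the sense of Definition~\ref{def:Min-Form} with per-$\ba$ weight matrices.

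\textbf{Step 3 (the two-classifier mixture).} By $\eta$-continuity (Assumption~\ref{ass:data}), for each fixed weight matrix the set of $\bx$ on which the $\arg\min$ is not a singleton has $\P$-measure zero, so any single weighted classifier is essentially unique and its induced confusions are a single point; it therefore typically fails to make $\cV(\bh^\ast)=\mathbf{0}$ on the tight constraints. To reconcile minimality of $L(\cdot,\blambda^\ast)$ with the complementary-slackness equalities, I consider two dual-optimal weight systems $\{\bW_{1,\ba}\}_{\ba}$ and $\{\bW_{2,\ba}\}_{\ba}$ (obtained by perturbing $\blambda^\ast$ along constraint normals, or more abstractly as two extreme points of the face of minimizers of the dual), and their corresponding weighted classifiers $\bh_1,\bh_2$. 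Because $L(\cdot,\blambda^\ast)$ decomposes as an $\bx$-pointwise integrand that depends on $\bh$ only through $\ba$-localized weights, any convex combination with an $\ba$-measurable coefficient $\beta_\ba$ of $\bh_1,\bh_2$ is again a minimizer of $L(\cdot,\blambda^\ast)$. The scalars $\{\beta_\ba\}_{\ba\in\cA}$ then provide exactly the degrees of freedom needed to move the induced confusion tuple onto the constraint boundary and realize $\cV(\bh^\ast)=\mathbf{0}$; feasibility of \eqref{eq:Bayes} guarantees such $\beta_\ba\in(0,1)$ exist.

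\textbf{Main obstacle.} The subtle step is Step 3: justifying that \emph{two} weighted classifiers suffice (as opposed to $J{+}1$ that a generic Carath\'eodory argument on the convex set of achievable confusions would yield), and that the mixing coefficient may be taken to depend only on $\ba$. This hinges on the fact that the Lagrangian integrand is $\ba$-localized once Assumption~\ref{ass:gps} is applied, so per-$\ba$ randomization preserves pointwise optimality; and on the fact that the $|\cA|$ scalar knobs $\{\beta_\ba\}$ provide ample freedom to match the active fairness equalities at the optimum.
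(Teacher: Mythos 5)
There is a genuine gap, and it sits exactly where you flagged it: Step 3 does not go through as written, and Step 1 is also shakier than you acknowledge. On Step 1, mere feasibility of \eqref{eq:Bayes} does not yield an attained saddle point: the achievable confusion set is convex and compact but not polyhedral, so without a constraint qualification (e.g.\ a Slater point) the optimal multiplier $\blambda^\ast$ may fail to exist (it can run off to infinity when the constraints are active only on the boundary of the confusion set). On Step 3, your construction of the two classifiers is internally inconsistent. If $\bh_1,\bh_2$ are obtained from \emph{perturbed} duals $\blambda^\ast\pm\delta$, they are not minimizers of $L(\cdot,\blambda^\ast)$, so their mixtures need not be either, and you cannot conclude primal optimality. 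If instead both are minimizers of $L(\cdot,\blambda^\ast)$, then under Assumption~\ref{ass:data} the tie set $\{(\bW(\ba,\blambda^\ast)^\top\eta(\bx))_k=(\bW(\ba,\blambda^\ast)^\top\eta(\bx))_{k'}\}$ has measure zero whenever the relevant columns of $\bW$ differ, so $\bh_1=\bh_2$ a.e.\ and the knobs $\beta_{\ba}$ move nothing --- they cannot be ``exactly the degrees of freedom needed'' to land on the constraint boundary. The assertion that $|\cA|$ scalars suffice to match the active equalities is never proved.

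The paper avoids duality entirely and works in confusion space, which is where the ``two classifiers'' really come from. It decomposes $\bC=\sum_{\ba}\bbP(\ba)\bC^{\ba}$ and $\bC^g=\sum_{\ba}\bbP(\ba\mid g)\bC^{\ba}$ (using Assumption~\ref{ass:gps}), rewrites \eqref{eq:Bayes} as a linear program over the tuple of intersectional confusions $\{\bC^{\ba}\}$, and then uses two geometric facts about each achievable set $\cC^{\ba}$: (i) it is convex, so any point in it --- in particular the optimal $\bC^{\ba,\ast}$ --- lies on a segment whose endpoints are two boundary points, giving $\bC^{\ba,\ast}=\beta_{\ba}\bC^{\ba,1}+(1-\beta_{\ba})\bC^{\ba,2}$; and (ii) every boundary point is exposed by a supporting hyperplane, i.e.\ maximizes some $\langle\bW,\cdot\rangle$ over $\cC^{\ba}$, and is therefore attained by a weighted classifier in the sense of Definition~\ref{def:Min-Form}. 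Linearity of the confusion map then converts the mixture of confusions into the mixture of classifiers with $\ba$-dependent coefficient. If you want to salvage your route, you would need to replace the dual-perturbation story with precisely this convexity/support-function argument applied per intersectional group; your Step 2 pointwise reduction is correct and is essentially the same computation the paper uses later for the plugin oracle, but it is not by itself enough to produce the two-point mixture.
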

One key observation is that pointwise, the Bayes-optimal classifier can be decomposed based on intersectional groups $\cG_{\text{intersectional}} = \cA$, even when $\cG_\text{fair}$ is overlapping. This observation will prove useful for algorithms. 

\subsection{Intersectional group fairness implies overlapping group fairness}

Recent research~\cite{Kearns18} has shown how imposing overlapping group fairness using independent fairness restrictions can lead to violation of intersectional fairness, primarily via examples. This observation led to the term {\em fairness gerrymandering}. Here, we examine this claim more formally, showing that enforcing intersectional fairness controls overlapping fairness, although the converse is not always true, i.e., enforcing overlapping fairness does not imply intersectional fairness. We show this result for the general case of quasi-convex fairness measures, with linear fairness metrics recovered as a special case.
\begin{proposition}
For any $\gps$ that satisfies assumption~\ref{ass:gps}, suppose $\phi:[0,1]^{K\times K}\times[0,1]^{K\times K}\to \R_+$ is quasiconvex, 
$\phi(\C, \C^g) \leq 0 
 \, \forall g\in\cG_\text{intersectional}
\implies 
\phi(\C,\C^g)\leq 0 \,
\forall g\in\gps. $
The converse does not hold.
\label{section:comparison}
\end{proposition}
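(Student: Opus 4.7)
The plan is to exploit the fact, guaranteed by Assumption~\ref{ass:gps}, that every $g\in\gps$ is determined purely by the sensitive attribute vector $\ba$, so $g$ decomposes as a disjoint union of intersectional atoms $g_{\ba}\in\cG_{\text{intersectional}}$. Concretely, write $g=\bigsqcup_{\ba\in S_g}g_{\ba}$ for some $S_g\subseteq\cA$. Because $\bC^{g}$ is the confusion matrix conditioned on $X\in g$, the tower rule immediately gives the convex combination
\begin{align*}
\bC^{g}=\sum_{\ba\in S_g}\frac{\pi_{g_{\ba}}}{\pi_{g}}\,\bC^{g_{\ba}},
\qquad \sum_{\ba\in S_g}\frac{\pi_{g_{\ba}}}{\pi_{g}}=1.
\end{align*}
(Atoms with $\pi_{g_{\ba}}=0$ can be dropped without affecting the argument.)

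Next I would invoke the quasiconvexity of $\phi$. Joint quasiconvexity of $\phi$ on $[0,1]^{K\times K}\times[0,1]^{K\times K}$ implies quasiconvexity of $\phi(\C,\cdot)$ with the first argument fixed (the line through $(\C,y_1)$ and $(\C,y_2)$ stays at first coordinate $\C$). By the standard extension of quasiconvexity to arbitrary finite convex combinations,
\begin{align*}
\phi\bigl(\C,\bC^{g}\bigr)
=\phi\Bigl(\C,\textstyle\sum_{\ba\in S_g}\tfrac{\pi_{g_{\ba}}}{\pi_{g}}\bC^{g_{\ba}}\Bigr)
\le \max_{\ba\in S_g}\phi\bigl(\C,\bC^{g_{\ba}}\bigr)\le 0,
\end{align*}
where the last inequality uses the hypothesis that intersectional fairness holds. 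Since $g\in\gps$ was arbitrary, this establishes the forward implication.

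For the converse, the plan is to exhibit a concrete gerrymandering counterexample with two binary sensitive attributes (say race $r\in\{0,1\}$ and gender $s\in\{0,1\}$), giving four intersectional cells and four independent marginal groups. I would pick a balanced distribution over the four cells and construct a classifier whose positive-prediction rate equals $1/2$ on $\{r=0\}$, $\{r=1\}$, $\{s=0\}$, $\{s=1\}$ (so independent demographic parity holds with $\nu=0$), but equals, say, $1$ on $\{r=0,s=0\}$ and $\{r=1,s=1\}$ and $0$ on the other two cells. Then $\phi_{\mathrm{DP}}^{\pm}$ in the intersectional sense is violated for every atom by an amount $1/2$, while the independent/overlapping $\phi_{\mathrm{DP}}^{\pm}$ is exactly zero on each marginal group, which contradicts intersectional fairness. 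The main obstacle is only bookkeeping, ensuring the constructed classifier actually produces the stated marginals — which follows by symmetry of the balanced design — and checking it against the linear DP formulation in Table~\ref{table-metrics}; no subtlety beyond that is required.
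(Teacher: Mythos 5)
Your proof is correct and follows essentially the same route as the paper: you decompose each $\bC^g$ as a convex combination of intersectional confusion matrices (justified by Assumption~\ref{ass:gps}) and apply quasiconvexity in the second argument to bound $\phi(\C,\C^g)$ by the maximum over intersectional atoms, exactly as in the paper's argument. For the converse, your two-attribute XOR construction is the same ``xor-like example from \citet{Kearns18}'' that the paper explicitly invokes before presenting its own more elaborate three-attribute counterexample, so no substantive difference there either.
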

\begin{remark}
Note that the converse claim of Proposition~\ref{section:comparison}, does not apply to $\cG_\text{gerrymandering}$. Controlling the gerrymandering fairness violation implies control of the intersectional fairness violation, since $\cG_\text{intersectional} \subseteq \cG_\text{gerrymandering}$.
\end{remark}

\begin{algorithm}[t]
\caption{\name,
Group-fair classification with overlapping groups,
\label{alg:general}}
\KwIn{$\psi:[0,1]^{K\times K} \to[0,1],\, 
\Phi: [0,1]^{K\times K}\times([0,1]^{K\times K})^{\gps}\to[0,1]^J$}
\myinput{samples $\{(\bx_1,y_1),\ldots, (\bx_n, y_n)\}$.}
Initialize $\vec\blambda_1\in [0,B]^{J}$\;
\For{$t=1,\ldots, T$}{
$h^t \gets \mino_{h\in\cH}(\cL(h,\vec\blambda^t), z^n)$\;
$\vec\blambda^{t+1}\gets \update_t(\vec\blambda^t, \Phi(\hmc[h^t], \hconfsh{h^t})-\ve)$\;
}
$\bar{\vect{h}}^T \gets \frac{1}{T}\sum_{t=1}^T \vect h^t,\quad 
\bar{\vec\blambda}^T\gets \frac{1}{T}\sum_{t=1}^T\vec\blambda^t$\;
\vspace{0.03in}
\Return{$(\bar{\vect h}^T, \bar{\vec\blambda}^T)$}
\end{algorithm}

\section{Algorithms}
\label{section:algorithms}
Here we present \name, a general empirical procedure for solving \eqref{eq:Bayes}. 
The Lagrangian of the constrained optimization problem \eqref{eq:Bayes} is $\cL(\vect h, \blambda) = \cE(\vect h) + \blambda^\top\cV(\vect h)$ with empirical Lagrangian $\hat\cL(\vect h,\blambda) = \hat\cE(\vect h) + \blambda^\top(\cV(\vect h)-\ve)$, where $\ve$ is a buffer for generalization.
Our approach involves finding a saddle point of the Lagrangian. The returned classifiers will be probabilistic combinations of classifiers in $\cH$, i.e. the procedure returns a classifier in $\conv(\cH)$.
In the following, we first assume the dual parameter $\blambda$ is fixed, and describe the primal solution as a classification oracle. We consider both plugin and weighted ERM. In brief, the plugin estimator first proceeds assuming $\beeta(\bx)$ is known, then we {\em plugin} the empirical estimator $\hat\beeta(\bx)$ in its place. The plugin approach has the benefit of low computational complexity once fixed. On the other hand, the weighted ERM estimator requires the solution of a weighted classification problem in each round, but avoids the need for estimating $\hat\beeta(\bx)$.

\subsection{Weighted ERM Oracle}
\label{section:werm}

In the weighed ERM approach we parametrize $h:\cX \to[K]$ by a function class $\cF$ of functions $\bff:\cX\to\R^K$. The classification is the argmax of the predicted vector, $h(\bx) = \argmax_j(\bff(\bx)_j)$, so we denote the set of classifiers as $\cH^{werm} = \argmax\circ\cF$. 
The following special case of Definition 1 in \citep{ramaswamy2016convex} outlines the required conditions for weighted multiclass classification calibration. This is commonly referred to as cost-sensitive classification~\citep{agarwal18} when applied to binary classification.
\begin{definition}[$\bW$-calibration~\citep{ramaswamy2016convex}]
Let $\bW \in \bbR_+^{K\times K}$. A surrogate function $\bL: \bbR^K \to \bbR^K_+$ is said to be $\bW$-calibrated if
$$
\forall p \in \Delta^K: \inf_{\bu: \argmax(u) \notin \argmin_k (\bp^\top\bW)_k } \bp^\top\bL(\bu) > \inf_{\bu} \bp^\top\bL(\bu).
$$
\label{def:calibration}
\end{definition}
Note that the weights are sample (group) specific -- which, while uncommon, is not new, e.g., \citet{pires13}.
\begin{proposition}
The weighted ERM estimator for average fairness violation is given by:
$
h(\bx) = \argmax_j(\bff^*(\bx)_j), \;
\bff^* = \min_{\bff\in\cF} \hat L(f); \;$
where $\hat L(\bff) = \hat \bbE [\by^T \bL(\bff)]$ is a multiclass classification surrogate for the weighted multiclass error with group-dependent weights $\forall \ba \in \cA$
\begin{align}
    \bW(\bx) = 
    \left[\bD + \sum_{j=1}^J \blambda_j\bigg(\bU_j-\sum_{g\in\gps}\frac{\1_{\ba\in g}}{\hat\pi(g)}\bV_j^g\bigg)\right].
    \label{eq:weights}
\end{align}
\label{prop:wwerm}
\end{proposition}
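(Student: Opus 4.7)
}
The plan is to rewrite the empirical Lagrangian $\hat\cL(\bh,\blambda) = \hat\cE(\bh) + \blambda^\top(\hat\cV(\bh)-\beps)$ as a single empirical average of a per-sample linear loss on the confusion contribution $\hat\bC^{(i)}[\bh]$, read off the resulting per-sample weight matrix as exactly the $\bW(\bx)$ of \eqref{eq:weights}, and then invoke $\bW$-calibration (Definition~\ref{def:calibration}) to pass from the induced weighted multiclass classification problem to its surrogate $\bL$.

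First, I would expand using the linearity assumptions $\hat\cE(\bh) = \langle\bD, \hat\bC[\bh]\rangle$ and $\hat\cV_j(\bh) = \langle\bU_j, \hat\bC[\bh]\rangle - \sum_{g\in\gps}\langle\bV_j^g, \hat\bC^g[\bh]\rangle$, together with the per-sample identities $\hat\bC[\bh] = \frac{1}{n}\sum_i \hat\bC^{(i)}[\bh]$ and $\hat\bC^g[\bh] = \frac{1}{n\hat\pi(g)}\sum_i \mathbf{1}_{\bx_i\in g}\,\hat\bC^{(i)}[\bh]$, with $\hat\pi(g)=|g|/n$. Swapping the order of the sums over $g$ and $i$ and collecting the matrices multiplying each $\hat\bC^{(i)}[\bh]$ gives
\begin{align*}
\hat\cL(\bh,\blambda) &= \frac{1}{n}\sum_{i=1}^n \Big\langle \bD + \sum_{j=1}^J\blambda_j\Big(\bU_j - \sum_{g\in\gps}\frac{\mathbf{1}_{\bx_i\in g}}{\hat\pi(g)}\bV_j^g\Big),\; \hat\bC^{(i)}[\bh]\Big\rangle - \blambda^\top\beps \\
&= \frac{1}{n}\sum_{i=1}^n \langle \bW(\bx_i), \hat\bC^{(i)}[\bh]\rangle - \blambda^\top\beps,
\end{align*}
with $\bW(\bx)$ matching \eqref{eq:weights} (Assumption~\ref{ass:gps} lets me identify $\mathbf{1}_{\bx_i\in g}=\mathbf{1}_{\ba_i\in g}$). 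Since $\hat\bC^{(i)}_{k,\ell}[\bh]=\mathbf{1}_{y_i=k}\,h_\ell(\bx_i)$, the inner product equals $\mathbf{e}_{y_i}^\top \bW(\bx_i)\bh(\bx_i)$, so up to a $\bh$-independent additive term the $\bh$-subproblem is the empirical risk $\hat\bbE[\mathbf{e}_Y^\top \bW(\bx)\bh(\bx)]$ of a weighted multiclass classifier whose pointwise Bayes rule is $h(\bx)\in\argmin_k(\vect{\eta}(\bx)^\top \bW(\bx))_k$.

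Finally, to obtain the surrogate form declared in the proposition, I would restrict to $\cH^{werm}=\argmax\circ\cF$ and apply Definition~\ref{def:calibration} pointwise in $\bx$: for a $\bW(\bx)$-calibrated surrogate $\bL$, minimizing $\hat L(\bff)=\hat\bbE[\by^\top\bL(\bff(\bx))]$ over $\bff\in\cF$ (with $\by=\mathbf{e}_Y$) forces $\argmax_j f(\bx)_j$ into $\argmin_k(\bp^\top\bW(\bx))_k$ at every conditional law $\bp=\vect{\eta}(\bx)$, which is exactly the weighted ERM estimator in the claim. The main subtlety, and what makes the argument non-routine, is that $\bW$ varies with $\bx$ through the group indicators, so the calibration property has to be invoked at each $\bW(\bx)$ in the range of $\bW(\cdot)$ rather than at a single fixed $\bW$; this is precisely the ``sample (group) specific weights'' remark flanking the statement, and it is what forces one to assume Definition~\ref{def:calibration} uniformly over the finite family of realizable weight matrices induced by $\gps$.
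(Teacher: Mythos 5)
Your proposal is correct, and its first half --- expanding $\hat\cE$ and $\hat\cV$ via the per-sample confusions $\hat\bC^{(i)}[\bh]$, swapping the sums over $i$ and $g$, and reading off the per-sample weight matrix $\bW(\bx_i)$ of \eqref{eq:weights} --- is exactly the computation the paper performs (it writes the empirical Lagrangian as $\sum_i e_{y_i}^\top\bW(\bx_i)\vect h(\bx_i)/n$ up to constants). Where you diverge is the passage to the surrogate. You invoke $\bW$-calibration (Definition~\ref{def:calibration}) pointwise at each realizable weight matrix, noting correctly that $\bW(\cdot)$ takes only finitely many values since it depends on $\bx$ only through the group indicators, so calibration must hold uniformly over that finite family. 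The paper instead performs a reduction to a \emph{single} standard multiclass surrogate: it treats $\bw_i = \bW(\bx_i)^\top e_{y_i}$ as a cost vector, shifts each $\bw_i$ by a constant so that $\vec 1 - \bw_i/s(\bw_i)$ is a valid probability vector (the shift leaves the argmin unchanged), and reinterprets the problem as ordinary surrogate risk minimization under a tilted empirical distribution $\tilde\P(\bx_i)\propto s(\bw_i)$ with soft labels $\tilde\beeta(\bx_i)$, yielding the concrete reweighted loss in \eqref{eq:rwl}. The trade-off: your route needs a surrogate that is $\bW$-calibrated for every matrix in the induced family (a stronger hypothesis on $\bL$, though one the main text explicitly sets up), while the paper's route needs only a standard 0-1-calibrated $\ell$ but must handle the nonnegativity of the weights explicitly via the shift --- a technicality your argument leaves implicit and should at least mention, since Definition~\ref{def:calibration} is stated for $\bW\in\bbR_+^{K\times K}$ and the matrices $\bU_j-\sum_{g\in\gps}\bV_j^g/\hat\pi(g)$ in \eqref{eq:weights} can have negative entries. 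Neither gap is fatal; both arguments arrive at the same estimator.
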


\subsection{The Plugin Oracle}
\label{section:plugin}

The plugin hypothesis class are the weighted classifiers, identified by Theorem~\ref{thrm:Min-Form} as $\cH^{plg} = \{h(\bx)=\argmin_{j\in [K]}( \hat\beeta(\bx)^\top \bB(\bx))_j: \bB(\bx)\in\R^{K\times K}\}$. Here, we focus on the average violation case only. 
By simply-reordering terms, the population problem can be determined as follows.
\begin{proposition}
The plug-in estimator for average fairness violation is given by $\hat h(\bx) = \argmin_{k\in [K]} (\beeta(\bx)^\top \bW(\bx))_k$, where $\bW(\bx)$ is defined in \eqref{eq:weights}.
\label{prop:plugin}
\end{proposition}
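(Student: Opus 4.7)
The plan is to show that the Lagrangian $\cL(\vect h, \blambda) = \cE(\vect h) + \blambda^\top \cV(\vect h)$, viewed as a functional of $\vect h$ for fixed $\blambda$, can be rewritten as a single integral of a quantity that is linear in $\vect h(\bx)$ pointwise. Then pointwise minimization over the simplex $\Delta^K$ produces a weighted classifier in the sense of Definition~\ref{def:Min-Form}, and the plug-in estimator is obtained by substituting $\hat\beeta$ (and the empirical $\hat\pi_g$) into the resulting expression.

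First I would expand the error term using $\bC_{k,\ell}[\vect h] = \int \eta_k(\bx) h_\ell(\bx)\,d\P(\bx)$ to obtain
\[
\cE(\vect h) \;=\; \langle \bD, \bC[\vect h]\rangle \;=\; \int \beeta(\bx)^\top \bD\, \vect h(\bx)\,d\P(\bx).
\]
Next, for each $j \in [J]$, I would expand $\cV(\vect h)_j = \langle \bU_j, \bC\rangle - \sum_{g\in\gps}\langle \bV_j^g, \bC^g\rangle$, rewriting each conditional group confusion via the identity $\bC^g_{k,\ell} = \pi_g^{-1}\int \1_{\bx\in g}\,\eta_k(\bx) h_\ell(\bx)\,d\P(\bx)$. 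Collecting the pieces and exchanging sum and integral yields
\[
\cL(\vect h, \blambda) \;=\; \int \beeta(\bx)^\top \bW(\bx)\, \vect h(\bx)\,d\P(\bx),\qquad \bW(\bx) \;=\; \bD + \sum_{j=1}^J \lambda_j\Bigl(\bU_j - \sum_{g\in\gps} \frac{\1_{\bx\in g}}{\pi_g}\bV_j^g\Bigr).
\]
By Assumption~\ref{ass:gps}, each $\1_{\bx\in g}$ depends on $\bx$ only through the sensitive attribute $\ba$, so $\bW(\bx)$ is truly a function of $\ba$ alone; this justifies writing $\1_{\ba\in g}$ in \eqref{eq:weights}.

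Since $\vect h(\bx) \in \Delta^K$ is unconstrained pointwise across different $\bx$, the integrand is minimized by placing all mass at $\argmin_{k\in[K]}\, (\beeta(\bx)^\top \bW(\bx))_k$, which recovers the claim at the population level. The plug-in oracle is then obtained by replacing the unknown quantities $\beeta$ and $\pi_g$ with their empirical counterparts $\hat\beeta$ and $\hat\pi_g$, which is exactly the class $\cH^{plg}$ with $\bB(\bx) = \bW(\bx)$ as in \eqref{eq:weights}.

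The main obstacle is purely bookkeeping: one must be careful to place the normalizing factor $\pi_g^{-1}$ from the group conditioning inside the bracket in the correct position, and to invoke Assumption~\ref{ass:gps} to argue that $\bW(\bx)$ depends on $\bx$ only via $\ba$. No subtlety arises from tie-breaking or feasibility, since the claim only characterizes the parametric form of the optimizer; Assumption~\ref{ass:data} ensures that ties occur on a null set so that the $\argmin$ is essentially unique.
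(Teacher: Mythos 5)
Your proposal is correct and follows essentially the same route as the paper's own proof: both rewrite the Lagrangian as $\cL(\vect h,\blambda)=\E\bigl[\beeta(\bx)^\top \bW(\bx)\,\vect h(\bx)\bigr]$ by substituting $\bC[\vect h]=\E[\beeta(\bx)\vect h(\bx)^\top]$ and $\bC^g[\vect h]=\E[\1_{\{\bx\in g\}}\beeta(\bx)\vect h(\bx)^\top]/\pi_g$, then minimize pointwise and plug in $\hat\beeta$ and $\hat\pi_g$. Your added remarks on Assumption~\ref{ass:gps} (so that $\bW$ depends only on $\ba$) and on null-set tie-breaking via Assumption~\ref{ass:data} are harmless refinements of the same argument.
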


\subsection{\name, a General Group-Fair Classification Algorithm}
We can now present \name, a general algorithm for group-fair classification with overlapping groups, as outlined in Algorithm~\ref{alg:general}. As outlined, our approach proceeds in rounds, updating the classifier oracle and the dual variable. Interleaved with the primal update is a dual update $\update_t(\blambda, \bv)$ via gradient descent on the dual variable. The resulting classifier is the average over the oracle classifiers.

{\bf Recovery of existing methods.}
When the groups are non-overlapping, \name\ with the Plugin oracle and projected gradient ascent update recovers FairCOCO ~\citep{Narasimhan18}. Similarly, when the groups are non-overlapping, and the labels are binary, \name\ with the weighted ERM oracle and exponentiated gradient update recovers FairReduction~\citep{agarwal18} (see also Table~\ref{tab:algs}). Importantly, \name\ enables a straightforward extension to overlapping groups.

\section{Consistency}
\label{section:consistency}
Here we discuss the consistency of the weighted ERM and the plugin approaches. 
For any class $\cH = \{h:\cX\to[K]\}$, denote $\cH_k = \{\1_{\{h(x)=k\}}:h\in \cH\}$. We assume WLOG that $\vc(\cH_1)=\ldots=\vc(\cH_K)$ and denote this quantity as $\vc(\cH)$. 
Next, we give a theorem relating the performance and satisfaction of constraints of an empirical saddle point to an optimal fair classifier. 

\begin{theorem}
\label{thm:saddle-main}
Suppose $\psi : [0,1]^{K\times K}\to [0, 1]$ and $\Phi:[0,1]^{K\times K}\times ([0,1]^{K\times K})^{\gps}\to [0,1]^J$ are $\rho$-Lipschitz w.r.t. $\|\cdot\|_{\infty}$.
Recall $\hat\cL(\vect h,\blambda)=\hat\cE(\vect h)+\blambda^\top(\hat\cV(\vect h)-\ve\vec 1)$.  Define $\gamma(n',\cH,\delta) = \sqrt{\frac{\VC(\cH)\log(n)+\log(1/\delta)}{n}}$. If $n_{\min} = \min_{g\in\gps} n_g,\, \ve = \Omega\left(\rho\gamma(n_{\min}, \cH, \delta)\right)$ then w.p. $1-\delta$:

If $(\bar{\vect h}, \bar{\vec\blambda})$ is a $\nu$-saddle point of $\max_{\blambda\in[0,B]^J}\min_{\vect h\in\conv\cH} \hat\cL(\vect h, \blambda)$, in the sense that $\max_{\blambda\in[0,B]^J} \hat\cL(\bar{\vect h}, \blambda)-\min_{\vect h\in\conv(\cH)}\hat\cL(\vect h, \bar{\vec\blambda})\leq\nu$, and $\vect h^*\in\conv(\cH)$ satisfies $\cV(\vect h^*)\leq 0$, then
\begin{equation*}
    \cE(\bar{\vect h})\leq \cE(\vect h^*)+\nu+\cO\left(\rho\gamma(n,\cH,\delta)\right)
    , \quad
    \|\cV(\bar{\vect h})\|_{\infty}\leq \frac{1+\nu}{B}+\cO\left(\rho\gamma(n_{\min}, \cH,\delta)\right)+\ve.
\end{equation*}
\end{theorem}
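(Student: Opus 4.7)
The plan is to combine a uniform deviation bound between empirical and population quantities over $\conv(\cH)$ with a standard saddle-point manipulation. Both the error bound and the violation bound follow from applying the $\nu$-saddle condition at judiciously chosen primal/dual points and then transferring between empirical and population quantities via the uniform bound. The whole argument hinges on the fact that, thanks to $\rho$-Lipschitzness of $\psi$ and $\Phi$ in $\|\cdot\|_\infty$, it suffices to control $\|\hat{\mat C}[\vect h]-\mat C[\vect h]\|_\infty$ and $\max_{g\in\gps}\|\hat{\mat C}^g[\vect h]-\mat C^g[\vect h]\|_\infty$ uniformly over $\vect h\in\conv(\cH)$.

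\emph{Step 1 (uniform concentration of confusions).} First I would show that, with probability $\ge 1-\delta$,
\[
\sup_{\vect h\in\conv(\cH)}\|\hat{\mat C}[\vect h]-\mat C[\vect h]\|_\infty \lesssim \gamma(n,\cH,\delta), \qquad \sup_{\vect h\in\conv(\cH)}\max_{g\in\gps}\|\hat{\mat C}^g[\vect h]-\mat C^g[\vect h]\|_\infty \lesssim \gamma(n_{\min},\cH,\delta).
\]
Every entry $\hat{\mat C}^g_{k,\ell}[\vect h]$ is a linear functional of $\vect h$, so a uniform deviation bound over $\conv(\cH)$ reduces to one over $\cH$. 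On $\cH$ I would condition on the group-membership indicators $(\mathbf{1}[\bx_i\in g])_{i,g}$, which makes the index set $\{i:\bx_i\in g\}$ deterministic of size $n_g\ge n_{\min}$, and then invoke a standard VC-based uniform Hoeffding bound on the binary class $\cH_\ell$ (of VC dimension $\VC(\cH)$). A union bound over the $K^2(|\gps|+1)$ entries only contributes log factors already absorbed by $\gamma$. Lipschitzness of $\psi$ and $\Phi$ then propagates this to $|\hat\cE(\vect h)-\cE(\vect h)|\lesssim \rho\gamma(n,\cH,\delta)$ and $\|\hat\cV(\vect h)-\cV(\vect h)\|_\infty\lesssim \rho\gamma(n_{\min},\cH,\delta)$ uniformly in $\vect h$.

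\emph{Step 2 (saddle-point manipulation).} Choosing $\ve = \Omega(\rho\gamma(n_{\min},\cH,\delta))$, the hypothesis $\cV(\vect h^*)\leq \mathbf 0$ and Step~1 yield $\hat\cV(\vect h^*)-\ve\vec 1\leq \mathbf 0$. Since $\bar{\vec\blambda}\geq \mathbf 0$, this gives $\hat\cL(\vect h^*,\bar{\vec\blambda})\leq \hat\cE(\vect h^*)$. The $\nu$-saddle condition entails $\hat\cL(\bar{\vect h},\blambda)\leq \hat\cL(\vect h,\bar{\vec\blambda})+\nu$ for every admissible $(\vect h,\blambda)$. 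Plugging in $\blambda=\vec 0$ and $\vect h=\vect h^*$ yields $\hat\cE(\bar{\vect h})\leq \hat\cE(\vect h^*)+\nu$, which Step~1 transfers to $\cE(\bar{\vect h})\leq \cE(\vect h^*)+\nu+\cO(\rho\gamma(n,\cH,\delta))$. Plugging in $\blambda = B e_j$ for the worst coordinate $j$ gives $\hat\cE(\bar{\vect h})+B(\hat\cV_j(\bar{\vect h})-\ve)\leq \hat\cE(\vect h^*)+\nu$, so $\hat\cV_j(\bar{\vect h})\leq \ve+(1+\nu)/B$ using $\hat\cE\in[0,1]$; a final application of Step~1 delivers the stated bound on $\|\cV(\bar{\vect h})\|_\infty$.

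\emph{Main obstacle.} The delicate part is Step~1 for the group-specific confusions, because the group sample sizes $n_g$ are themselves random and the empirical confusions implicitly depend on which samples land in each group. Conditioning on the group-membership vector bypasses this: conditionally on the memberships, the within-group $(\by_i,\bz_i)$ are i.i.d.\ from $\P(\cdot\mid g)$, and one applies the VC bound at rate $1/\sqrt{n_g}\leq 1/\sqrt{n_{\min}}$ before integrating. Everything else is bookkeeping: linearity of $\mat C^g[\cdot]$ in $\vect h$ gives the $\conv(\cH)$ extension for free, Lipschitzness converts confusion-matrix deviations into deviations of $\cE$ and $\cV$, and the saddle-point inequalities are textbook Lagrangian algebra.
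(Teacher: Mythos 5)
Your proposal is correct and follows essentially the same route as the paper: a VC-based uniform concentration bound on the population vs.\ empirical (group-specific) confusion matrices (the paper's Lemma on confusion matrix generalization), transferred to $\cE$ and $\cV$ via Lipschitzness, followed by the same saddle-point algebra — feasibility of $\vect h^*$ giving $\hat\cL(\vect h^*,\bar{\vec\blambda})\leq\hat\cE(\vect h^*)$, then instantiating the dual variable at $\vec 0$ for the error bound and at $Be_j$ for the violation bound. Your remark on conditioning on the group-membership indicators to handle the randomness of $n_g$ is in fact slightly more careful than the paper's appeal to "standard binary classification generalization."
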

Thus, as long as we can find an arbitrarily good saddle point, which weighted ERM grants if $\cH^{werm}$ is expressive enough while having finite VC dimension, then we obtain consistency. A saddle point can be found by running a gradient ascent algorithm on $\blambda$ confined to $[0,B]^J$, which repeatedly computes $h^t = \argmin_{h\in\cH} \hat\cL(h, \blambda^t)$; the final $(\bar{\vect  h},\bar{\vec\blambda})$ are the averages of the primal and dual variables computed throughout the algorithm. 

Although Theorem~\ref{thm:saddle-main} captures the spirit of the argument for the plugin algorithm, it only applies naturally to the weighted ERM algorithm. This is because the plugin algorithm is solving a subtly different minimization problem: it returns $h^t$ as the \textit{population minimum}, \textit{if the estimated regression function $\hat\eta$ replaces the true regression function}. 
\begin{theorem}
\label{thm:plugcon}
With probability at least $1-\delta$, if projected gradient ascent is run as $\update_t(\vec\blambda, \bv) = \proj_{[0,B]^J}(\vec\blambda+\eta \bv)$ for $T$ iterations with step size $\eta= \frac{1}{B\sqrt{T}}$ and for $t=1,\ldots, T,\; h^t= \plugin(\hat\beeta, (\hat\pi_g)_{g\in\gps}, \psi, \Phi)$, letting $\rho = \max\{\|\psi\|_1, \|\phi_1\|_1,
\ldots, \|\phi_M\|_1\},\, \rho_g = \sum_{j=1}^J \|\bV^g_j\|_{\infty},\, \rho_{\all} = \|\bD\|_{\infty}+\sum_{j=1}^J \|\bU_j\|_{\infty},\,\Delta\beeta = \E\|\beeta(x)-\hat\beeta(x)\|_1, \check{n} = \min_{g\in\gps} n_g$, then
\begin{gather*}
\kappa := \cO\left(J\rho\sqrt{\frac{K^2\log(\check n) + \log(\frac{|\gps|K^2}{\delta})}{\check n}}\right) +\Delta\beeta \left(\rho_{\cX} + \sum_{g\in\gps}\frac{\rho_g}{\pi_g}\right) + \sqrt{\frac{\log(\frac{|\gps|}{\delta})}{n}}\sum_{g\in\gps}\frac{\rho_g}{\pi_g^2} \\
    \implies \cE_{\psi}(\bar{\vect h}^T) \leq \cE_{\psi}^*
    + \frac{JB}{\sqrt{T}}
    + \cO\left(BJ\kappa\right),
    \qquad \|\cV_{\phi}(\bar{\vect h}^T)\|_{\infty} \leq \frac{2J}{\sqrt{T}} + \cO\left(J\kappa\right). 
\end{gather*}
\end{theorem}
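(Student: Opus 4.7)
The plan is to combine three ingredients: a regret bound for projected gradient ascent on the dual $\blambda$; a per-round optimality-gap bound quantifying how far the plugin classifier $h^t$ is from the true Lagrangian minimizer $\argmin_{h\in\cH^{plg}}\cL(h,\blambda^t)$; and a saddle-point-to-primal conversion analogous to Theorem~\ref{thm:saddle-main}. Since each dual subgradient $\hat\cV(h^t)-\ve\vec 1$ is bounded in $[-1,1]^J$, standard projected OGD on $\blambda\in[0,B]^J$ with $\eta = 1/(B\sqrt T)$ yields $\max_{\blambda\in[0,B]^J}\frac{1}{T}\sum_t \hat\cL(h^t,\blambda) - \frac{1}{T}\sum_t \hat\cL(h^t,\blambda^t) \leq BJ/\sqrt T$.

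The per-round gap, together with the empirical-to-population transfer $\hat\cL\!-\!\cL$, is where $\kappa$ arises. Let $\bW^{\blambda,\pi}(\bx)$ denote the weight matrix in \eqref{eq:weights} as a function of $\blambda$ and group proportions $\pi$. By Proposition~\ref{prop:plugin} the true population minimizer is $h_*^t(\bx)=\argmin_k(\beeta(\bx)^\top\bW^{\blambda^t,\pi}(\bx))_k$, whereas the plugin oracle returns $h^t(\bx)=\argmin_k(\hat\beeta(\bx)^\top\bW^{\blambda^t,\hat\pi}(\bx))_k$. I would split the gap into three sources:
\textbf{(i)} replacing $\pi_g$ by $\hat\pi_g$ in $\bW$: by Hoeffding plus a union bound over $|\gps|$ groups, $|\hat\pi_g-\pi_g|\leq\sqrt{\log(|\gps|/\delta)/n}$, and this perturbs any classifier's weighted risk by at most $\sum_g \rho_g/\pi_g^2$ times the deviation, matching the third summand of $\kappa$;
\textbf{(ii)} replacing $\beeta$ by $\hat\beeta$ inside the argmin: since both the plugin and the true minimizer take the $\argmin$ of an inner product with a regression function, the suboptimality is bounded by $\bbE\|\beeta-\hat\beeta\|_1\cdot\|\bW^{\blambda^t}\|_\infty \leq \Delta\beeta\,(\rho_\cX + \sum_g \rho_g/\pi_g)$ for $\blambda^t\in[0,B]^J$, matching the $\Delta\beeta$ summand;
\textbf{(iii)} replacing the population confusions $\C^g[h^t]$ by empirical $\hmc^g[h^t]$ in $\hat\cL$: this requires uniform convergence of $\hmc^g[\vect h]-\C^g[\vect h]$ over $\vect h\in\cH^{plg}$ and over the $|\gps|K^2$ confusion entries, contributing the first VC-type summand of $\kappa$.

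Combining the per-round bound with the dual regret bound yields, for any $\blambda\in[0,B]^J$ and any $h^*$ feasible for \eqref{eq:Bayes},
\[
\hat\cL(\bar{\vect h}^T,\blambda)\leq \cL(h^*,\bar\blambda^T)+\frac{BJ}{\sqrt T}+\cO(BJ\kappa).
\]
Choosing $\blambda=\vec 0$ and transferring $\hat\cE$ to $\cE$ gives $\cE(\bar{\vect h}^T)\leq \cE^*+BJ/\sqrt T+\cO(BJ\kappa)$; choosing $\blambda=Be_j$ with $j$ realizing $\|\cV(\bar{\vect h}^T)\|_\infty$, together with $\bar\blambda^T\geq\vec 0$ and $\cV(h^*)\leq\vec 0$, forces the violation bound $\|\cV(\bar{\vect h}^T)\|_\infty\leq 2J/\sqrt T+\cO(J\kappa)$ after one more empirical-to-population transfer. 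The main obstacle is step (iii): uniform convergence over $\cH^{plg}$. The plugin's decision regions are intersections of $\binom{K}{2}$ linear half-spaces in $\hat\beeta$-space whose coefficients depend linearly on $\blambda$, so the class indicator family has VC complexity $\cO(K^2)$ per confusion entry, independent of $J$ and of $|\gps|$; a union bound over the $|\gps|K^2$ entries produces the $\log(|\gps|K^2/\delta)$ factor in $\kappa$. Carefully tracking this VC term together with the $1/\pi_g$ and $1/\pi_g^2$ scalings arising from the plugin's sensitivity to $\hat\beeta$ and $\hat\pi$ is the delicate technical core; the remainder parallels the primal-dual argument behind Theorem~\ref{thm:saddle-main}.
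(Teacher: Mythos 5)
Your proposal is correct and follows essentially the same route as the paper's proof: the same three-way decomposition of the per-round plugin suboptimality (Hoeffding on $\hat\pi_g$, the $\Delta\beeta\cdot\|\bW\|_\infty$ bound for the estimated regression function, and VC-based uniform convergence of the group confusion matrices), combined with the standard OGD regret bound and the saddle-point-to-primal conversion via the choices $\blambda=\vec 0$ and $\blambda$ a scaled coordinate vector. The only differences are cosmetic (the paper perturbs around $\blambda^*$ rather than using $Be_j$ directly, and your explicit half-space argument for the $\cO(K^2)$ VC complexity of $\cH^{plg}$ is slightly more detailed than the paper's appeal to standard results).
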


A key point in the presented analyses (for both procedures) is that the dominating statistical properties depend on the number of fairness groups. We note that $|\gps| \ll |\cG_\text{intersectional}| = |\cA|$ for the independent case, so this significantly improves results.  More broadly, we conjecture that the statistical bounds depend on $\min (|\gps|, |\cG_\text{intersectional}|)$, and leave the details to future work. We also note the statistical dependence on the size of the smallest group. This seems to be unavoidable, as we need an estimate of the group fairness violation in order to control it. To this end, group violations may be scaled by group size, which leads instead to a dependence on the VC dimension of $\gps$, improving statistical dependence with small groups at the cost of some fairness~\cite{Kearns18}. We expect that the bounds may be improved by a more refined analysis, or modified algorithms with stronger assumptions. We leave this detail to future work.

\begin{table}[t]
    \centering
    \begin{tabular}{ccc}
    \toprule
     & $\mino_{h\in\cH}(\cL(h,\blambda^t), z^n)$ & $\update_t(\blambda, \bv)$ \\\midrule
     FairReduction  &  $H \circ \argmin_{f\in\cF} \hat L(f)$ & $B\frac{\exp(\log\blambda-\eta_t \bv)}{B-\sum_{j=1}^M\lambda_j+\sum_{j=1}^M\exp(\log\lambda_i-\eta_t v_i)}$\\
     FairCOCO   & $\plugin(\hat\beeta, (\hat\pi_g)_{g\in\gps}, \psi, \Phi, \blambda^t)$ &
     $\proj_{[0,B]^M}(\blambda + \eta_t \bv)$
     \\\bottomrule
    \end{tabular}
    \vspace{0.03in}
    \caption{The oracles shown are $\plugin$ \eqref{eq:plugsol} and ERM on the reweighted $\hat L$  \eqref{eq:rwl}.  $H=[\argmax_{k\in[K]} (\cdot)_k]$ converts a function $\cX\to\R^K$ to a classifier. In FairCOCO, $\hat\beeta$ is estimated from samples $z^{1:n/2}=\{(x_1,y_1),\ldots, (x_{n/2}, y_{n/2})\}$ and all of the other probability estimates $(\hat\pi_g)_g$ and $\honfsh{h^t}$ are estimated from $z^{n/2:}=z^n\setminus z^{1:n/2}$.}
    \label{tab:algs}
\end{table}

\subsection{Additional Related Work}
\label{section:related}

Recent work by \citet{Foulds18, Kearns18} and \citet{hebert2018} were among the first to define and study intersectional fairness with respect to parity and calibration metrics respectively. \citet{Narasimhan18} provide a plugin algorithm for group fairness and generalization guarantees for the unrestricted case. 
\citep{Menon2018} considered Bayes optimality of fair binary classification where the sensitive attribute is unknown at test time, using an additional sensitive attribute regressor.  
\citet{2018cotterwoodwang} provide proxy-Lagrangian algorithm with generalization guarantees, assuming proxy constraint functions which are strongly convex, and argue that better generalization is achieved by reserving part of the dataset for training primal parameters and part of the dataset for training dual parameters. \citet{2018celis} provide an algorithm with generalization guarantees for independent group fairness based on solving a grid of interval constrained programs; their and \citet{Narasimhan18}'s work are most similar to ours.

\section{Experiments}
\label{section:experiments}
We consider demographic parity  as the fairness violation, i.e., $\phi_\text{DP}^{\pm} = \pm (\mat{C}_{0,0}^g+\mat{C}_{1,1}^g
- \mat{C}_{0,0}+\mat{C}_{1,1})-\nu,$ combined with 0-1 error $\psi(\mat C) = \mat C_{01} + \mat C_{10}$ as the error metric. All labels and protected attributes are binary or binarized. We use the following datasets (details in the appendix): (i) Communities and Crime, (ii) Adult census, (iii) German credit and (iv) Law school.

{\bf Evaluation Metric.}
We compute the "fairness frontier" of each method -- that is, we vary the constraint level $\nu$. 
We plot the fairness violation and the error rate on the train set and a test set. The fairness violation for demographic parity is defined by 
\begin{equation*}
    \text{fairviol}_\text{DP} = \max_{g\in\cG_\text{fair}} |\hmc^g_{0, 1}+\hmc^g_{1, 1} - \hmc_{0,1}-\hmc_{1,1}|
\end{equation*}
Observe that on the training set, it is always possible to achieve extreme points by ignoring either the classification error or the fairness violation.

{\bf Baseline: \texttt{Regularizer}} is a linear classifier implemented by using Adam to minimize logistic loss plus the following regularization function: 
\begin{align}
  \rho \sum_{j=1}^M\left(\frac{\sum_{i: (z_i)_j=1} \sigma(w^\top x_i)}{|\{i:(z_i)_j=1\}|}
    - \frac{\sum_{i=1}^n \sigma(w^\top x_i)}{n}\right)^2
    \label{eq:reg}
\end{align}
where $\sigma(r) = \frac{1}{1+e^{-r}}$ is the sigmoid function. This penalizes the squared differences between the average prediction probabilities for each group and the overall average prediction probability. Other existing methods we are aware of are either not applicable to overlapping groups, or are special cases of \name.

{\bf Experiment 1: Independent group fairness.}
We consider independent group fairness, defined by considering protected attributes separately.
Our results compare extensions of FairCOCO~\citep{Narasimhan18} and a  FairReduction~\citep{agarwal18}, existing special cases of \name\ using the plugin and weighted ERM oracles respectively. 
Results are shown in Figure~\ref{fig:plots}. We further present the differences in training time in \ref{tab:times}. On all datasets, the variants of \name\ are much more effective than a generic regularization approach.  
However, \texttt{Plugin} seems to violate fairness more often at test time -- perhaps this is due to the $\|\hat\eta-\eta\|_1$ term in the generalization bound in Theorem~\ref{thm:plugcon}. At the same time, \texttt{Plugin} is almost 2 orders of magnitude faster, since its $\mino$ essentially has a closed-form solution, while \texttt{Weighted-ERM} has to solve a new ERM problem in each iteration. 

\begin{figure}[t]
    \centering
    \includegraphics[width=\linewidth]{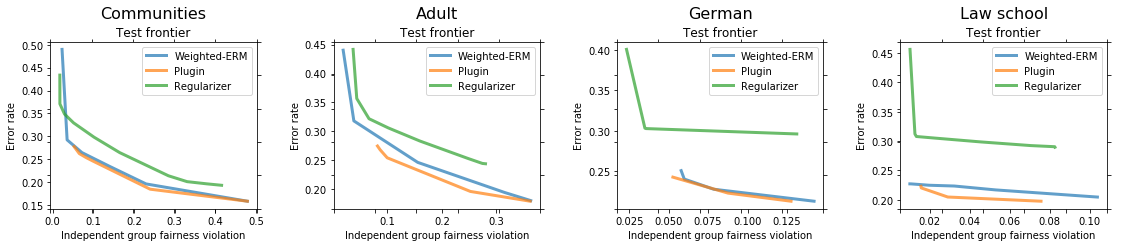}
    \caption{Experiments on independent group fairness, showing fairness frontier. The pareto frontier closest to the bottom left represent the best fairness/performance tradeoff.}
    \label{fig:plots}
\end{figure}
\begin{table}[t]
    \centering
    \caption{Average training times (averaged over the training sessions for each fairness parameter). The Plugin Oracle is significantly faster than other approaches.}
    \begin{tabular}{cccccccc}
    \toprule
    &\multicolumn{4}{c}{Independent}&
    \multicolumn{3}{c}{Gerrymandering}\\
    & C\& C & Adult & German & Law school & Adult & German & Law school\\\midrule
       \texttt{Weighted-ERM}  &  684.4 s & 424.0 s & 187.0 s & 68.6 s & 817.0 s&  40.4 s & 49.4 s\\
    \texttt{Plugin} & 11.5 s & 8.5 s &4.4 s &3.8 s & 699.8 s& 13.0 s & 17.7 s\\
    \texttt{Regularizer} & 75.4 s & 87.4 s & 35.2&  68.0 s & N/A & N/A & N/A\\
    \texttt{Kearns et al.}  &  N/A & N/A & N/A & N/A & 2213.7 & 821.5 s & 1674.4 s\\\bottomrule
    \end{tabular}
    \label{tab:times}
\end{table}
\begin{figure}[th!]
\centering
    \includegraphics[width=0.9\linewidth]{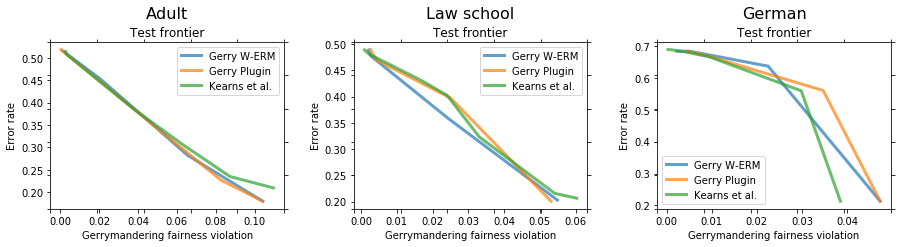}
    \caption{Experiments on gerrymandering group fairness. The pareto frontier closest to the bottom left represent the best fairness/performance tradeoff.}
    \label{fig:gerry}
\end{figure}

{\bf Experiment 2: Gerrymandering group fairness.}
Unfortunately, intersectional fairness is not statistically estimable in most cases as most intersections are empty. As a remedy, \citep{Kearns18} propose max-violation fairness constraints over $\cG_\text{gerrymandering}$, where each group is weighed by group size, i.e., $
\max_{g\in\cG_\text{gerrymandering}} \frac{|g|}{n}|\hmc^g_{0, 1}+\hmc^g_{1, 1} - \hmc_{0,1}-\hmc_{1,1}|
$, so empty groups are removed, and small groups have relatively low influence unless there is a very large fairness violation. We denote the approach of \citet{Kearns18} as \texttt{Kearns et al.} This approach is closely related to \texttt{Weighted-ERM} but searches for the maximally violated group by solving a cost-sensitive classification problem and uses fictitious play between $\lambda$ and $\vect h$. For the \texttt{Plugin} and \texttt{Weighted-ERM} approaches, we optimize the cost function directly using gradient ascent, precomputing the gerrymandering groups present in the data. 
Results are shown in Figure~\ref{fig:gerry}. We further present the differences in training time in Table~\ref{tab:times}. 
The results are roughly equivalent in terms of performance, however, both the \texttt{Weighted-ERM} and \texttt{Plugin} approach are 1-2 orders of magnitude faster than \texttt{Kearns et al.}

\section{Conclusion}
This manuscript considered algorithmic fairness across multiple overlapping groups simultaneously. Using a probabilistic population analysis, we present the Bayes-optimal classifier, which motivates a general-purpose algorithm, \name. Our approach unifies a variety of existing group-fair classification methods and enables extensions to a wide range of non-decomposable multiclass performance metrics and fairness measures. 
Future work will include extensions beyond linear metrics, to consider more general fractional and convex metrics. We also wish to explore more complex prediction settings beyond classification.

\bibliographystyle{plainnat}

\newpage
\appendix

\section*{Appendix}

\section{Bayes optimal}
\begin{customthm}{\ref{thrm:Min-Form}}
Under Assumption~\ref{ass:gps} and Assumption~\ref{ass:data}, if \eqref{eq:Bayes}, i.e., 
\begin{equation*}
\vect{h}^* \in \argmin_{\vect{h}}\; \cE(\vect{h}) \;  \sut \; \cV(\vect{h}) \le \mathbf{0},
\end{equation*}
is feasible (i.e., a solution exists), the Bayes-optimal classifier is given by $\mathbf{h}^*(\bx) = \mathbf{h}^*(\bz, \ba) = \beta_{\ba}\mathbf{h}_1(\bx) + (1-\beta_{\ba})\mathbf{h}_2(\bx),$ where $\beta_{\ba} \in (0,1), \forall \ba \in \cA$ and $\mathbf{h}_i(\bx)$ are weighted classifiers with weights $\{\{ \bW_{i, \ba} \}_{i \in \{1, 2\}}\}_{\ba \in \cA}$.
\end{customthm}
\begin{proof}
The key idea of the proof is to exploit the problem representation in terms of confusion matrices. The proof has two main steps (i) population analysis for feasible confusion matrices, and (ii) plug-in of the classifiers that achieve the Bayes optimal confusion.

{\bf Confusion space.}
As the first step, let $\cC^g = \{\bC^g(\bh) \, | \, \bh \in \cH\}$ be all group $g$ specific confusion matrices, and let $\cC_{\cG_\text{fair}} = \prod_{g \in \cG_\text{fair}} \cC^g$ be the product space of all confusion matrices corresponding to fair groups associated with a given instance of the problem. Similarly, let $\cC_{\cA} = \prod_{g \in \cG_\text{intersectional}} \cC^g$ be the product space of all confusion matrices corresponding to intersectional groups. A standard property of confusion matrices is that each $\cC^g$ is a convex set~\cite{narasimhan2015consistent, Narasimhan18, wang2019consistent}. Thus, each $\bC \in \cC^g$ can be described as a mixture of two boundary points, i.e., 
\begin{equation*}
\forall \, \bC \in \cC^g \, \exists \bC^1, \bC^2 \in \partial\cC^g, \, \beta \in [0, 1],\, 
\sut \, \bC = \beta \bC^1 + (1-\beta) \bC^2
\end{equation*}
Another useful fact is that all confusion matrices on the boundary can be achieved by a weighted classifier~\cite{narasimhan2015consistent, Narasimhan18, wang2019consistent}. This fact follows from the convexity of the set $C^g$, and is simply a dual representation -- via support functions, i.e., 
\begin{equation*}
\forall \, \bC \in \partial \cC^g, \, \exists \bW \, 
\sut \, \bC = \text{Conf}^g(\bh^*), \text{ where } \bh^* \in \underset{\bh \in \cH}{\argmax} \ip{\bW, \text{Conf}^g(\bh)},
\end{equation*}
and where, for notation clarity, we have  $\text{Conf}(\bh)$ as the confusion matrix of classifier $\bh$, and $\text{Conf}^g(\bh)$ as the group-restricted confusion matrix. Further, the solution $\bh^*$ can be represented as a weighted classifier (Definition~\ref{def:Min-Form}) ~\cite{Narasimhan18, wang2019consistent}. 

{\bf Population confusion problem.} Recall that the population confusion can be decoposed into their intersectional counterparts $\bC = \sum_{a \in \cG_\text{intersectional}} \bbP(a) \bC^a$. Similarly, each overlapping group confusion can be decomposed using the intersection confusions as $\bC^g \in \cC_{\cG_\text{fair}}$, $\bC^g = \sum_{a \in \cG_\text{intersectional}} \bbP(a|g) \bC^a$.

As the overall metric is a function of confusion matrices only, we can re-state \eqref{eq:Bayes} as the equivalent confusion problem (with slight abuse of notation) for any $\cG_\text{fair}$ as: 
\begin{gather*}
\bC^*, \{\bC^{g, *}\} = \argmin\; \psi(\bC) \;  \sut \; \Phi(\bC, \{\bC^{g}\}) \le \mathbf{0},\\
\bC = \sum_{a \in \cG_\text{intersectional}} \bbP(a) \bC^{a}\\
\bC^{g} = \sum_{a \in \cG_\text{intersectional}} \bbP(a|g) \bC^{a}\\
\bC^{a} = \text{Conf}^a(\bh).
\end{gather*}

After substituting the population $\bC$ and the group confusions $\bC^{g}$ with the presented linear functions of $\bC^{a}$, this is equivalent to the problem
\begin{equation*}
\{\bC^{a, *}\} = \argmin\; \psi( \{\bC^{a}\} ) \;  \sut \; \Phi(\{\bC^{a}\}) \le \mathbf{0},\quad
\bC^{a} = \text{Conf}^a(\bh).
\end{equation*}
Here, we have used the linearity of the cost functions $\psi$ and $\Phi$, and the linearity of the confusion matrix decompositions into intersectional confusion matrices. 

{\bf Putting it together.}
The final step is noting that a solution, if it exists, can be represented by feasible intersectional confusion matrices $\{\bC^{a, *}\}$, and in turn, each intersectional confusion matrix can be recovered as a weighted average of two intersectional boundary confusion matrices. Thus the corresponding classifiers can be recovered by a mixture of two weighted classifiers.
\end{proof}

\section{Independent vs. intersectional group fairness}

\begin{customprop}{\ref{section:comparison}}
For any $\gps$ that satisfies assumption~\ref{ass:gps}, suppose $\phi:[0,1]^{K\times K}\times[0,1]^{K\times K}\to \R_+$ is quasiconcave in its second argument, 
$\phi(\C, \C^g) \leq 0 
 \, \forall g\in\cG_\text{intersectional}
\implies 
\phi(\C,\C^g)\leq 0 \,
\forall g\in\gps. $
The converse does not hold.
\end{customprop}
\begin{proof}
(For the forward direction)

Recall that $f$ is {\em quasiconcave} if $f(\sum_i \lambda_i z_i) \le  \max_i \{f(z_i)\}$. 
When $\phi$ is quasiconvex, for any $\cG_\text{fair}$, we can compute $\phi(\bC, \bC^g) = \phi(\bC, \sum_{a \in \cG_\text{intersectional}} \lambda_a \bC^a) \le \max_{a \in \cG_\text{intersectional}} \phi(\bC, \bC^a)$, where $\lambda_a$ are linear weights (corresponding to inclusion probabilities).

Since $\phi(\bC, \bC^a) \le 0$ by the claim, it follows that $\phi(\C, \C^a) \leq 0 
 \, \forall a\in\cG_\text{intersectional}
\implies 
\phi(\C,\C^g)\leq 0 \,
\forall g\in\gps. $
\end{proof}

{\bf Converse.}
Though the above applies to any quasiconcave metric, in this manuscript we mainly consider linear metrics. As a corollary, intersectional group fairness with respect to common fairness metrics such as demographic parity or equal opportunity implies independent group fairness. A simple xor-like example from \citep{Kearns18} shows that the converse is not true. 

We provide another counterexample to the converse, showing a gap between independent and intersectional demographic parity (DP) group fairness, on an example with more realistic structure.
\begin{example}
Let $A_1, A_2, A_3$ be binary attributes and $\{A_m\}$ denote the event $\{A_m=1\}$. If  $\P(Y)=\P(A_1)=\P(A_2)=\P(A_3)=0.5$, $A_1,A_2,A_3$ are both independent and conditionally independent given $Y$, and $\P(A_m\mid Y) = 0.6$, then for every $P,N\subset\{1,2,3\}$ with $P\cap N=\emptyset$
\begin{equation*}
\P(Y\mid \cap_{i\in P} A_i, \cap_{j\in N} \bar{A}_j)
 = 0.5(1.2)^{|P|}(0.8)^{|N|}.
\end{equation*}
\end{example}
\begin{proposition}
An optimal (DP) intersectionally fair $\hat Y$ has, over every possible subgroup $G=\cap_{i\in P} A_i \cap_{j\in N} A_j,\; 
\P(\hat Y\mid G) = 0.384 = 0.5(1.2)^2(0.8)$ and has an error of $0.148$. 

On the other hand, an optimal (DP) independently fair classifier has  $\P(\hat Y\mid A_1,A_2,A_3) = 0.464,\,\P(\hat Y\mid A_i, A_j, \bar{A}_k) = 0.576,\,\P(\hat Y\mid A_i, \bar{A}_j, \bar{A}_k) = 0.384,\,\P(\hat Y\mid \bar{A}_i, \bar{A}_j, \bar{A}_k) = 0.656$ and has an error of $0.1$. 
\end{proposition}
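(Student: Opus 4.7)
The plan is to reduce each claim to a small linear program in the group-conditional positive rates $p_G := \Pr(\hat Y = 1 \mid G)$, using a standard per-group coupling identity, and then solve by inspection. The key lemma is that for any joint law of $(\hat Y, Y)$ on a subgroup $G$ with marginals $p_G$ and $\eta_G = \Pr(Y = 1 \mid G)$, setting $q = \Pr(\hat Y = Y = 1 \mid G)$ gives $\Pr(\hat Y \neq Y \mid G) = p_G + \eta_G - 2q \geq |p_G - \eta_G|$, with equality attained by the maximal coupling; hence any DP-fair classifier has total error at least $\sum_G \Pr(G)\,|p_G - \eta_G|$ and this bound is tight. Independence and uniformity of the $A_i$ give $\Pr(G) = 1/8$ for all eight intersectional subgroups, and the given formula yields $\eta_G$ values $0.256, 0.384, 0.384, 0.384, 0.576, 0.576, 0.576, 0.864$.

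For the intersectional case, DP forces $p_G \equiv p$, so the problem reduces to the one-dimensional weighted-median minimization $\min_p \tfrac{1}{8} \sum_G |p - \eta_G|$, whose solution set is $[0.384, 0.576]$. Plugging $p = 0.384$ gives error $\tfrac{1}{8}(0.48 + 3 \cdot 0.192 + 0 + 0.128) = 0.148$, matching the claim.

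For the independent case, convexity of the objective together with symmetry of the feasible set under permutations of $A_1, A_2, A_3$ yields a symmetric optimizer parametrized by $(p_0, p_1, p_2, p_3)$, and each of the three independent-DP constraints collapses to the single equation $p_3 + p_2 = p_1 + p_0$. Writing $p_i - \eta_i^\sharp = s_i - t_i$ with $s_i, t_i \geq 0$ and $\eta_i^\sharp$ the Bayes value for cohort $i$, the objective becomes $\tfrac{1}{8}\bigl((s_3 + t_3) + 3(s_2 + t_2) + 3(s_1 + t_1) + (s_0 + t_0)\bigr)$ subject to $(t_3 + t_2 + s_1 + s_0) - (s_3 + s_2 + t_1 + t_0) = 0.8$, using $\eta_3^\sharp + \eta_2^\sharp - \eta_1^\sharp - \eta_0^\sharp = 0.8$. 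Every slack has unit constraint coefficient, but slacks with objective coefficient $1$ (indices $0, 3$) are three times cheaper than those with coefficient $3$ (indices $1, 2$), so the LP optimum uses only $t_3$ and $s_0$ summing to $0.8$. Box constraints $p_i \in [0, 1]$ force $t_3 \in [0.056, 0.8]$ with $s_0 = 0.8 - t_3$, all yielding total cost $0.8$ and hence error $0.1$; the symmetric choice $t_3 = s_0 = 0.4$ recovers the paper's $p_3 = 0.464, p_0 = 0.656$.

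The main obstacle is the LP argument in the independent case, specifically showing that the optimum truly fixes $p_1 = \eta_1^\sharp$ and $p_2 = \eta_2^\sharp$ rather than trading between cohorts, and that the symmetry reduction is without loss. The slack reformulation reduces the former to a direct coefficient comparison and convexity-plus-permutation-symmetry handles the latter, but one must still verify the box constraints $p_i \in [0, 1]$ (which prevent putting all $0.8$ into $s_0$) and note that the resulting LP has a one-parameter family of optima, of which the midpoint matches the specific values stated.
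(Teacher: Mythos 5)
Your proof is correct and reaches the paper's numbers, but it is organized differently from the paper's argument, mainly on the independent case. For the intersectional part you and the paper do essentially the same thing: reduce to the one-dimensional weighted-median problem $\min_p \tfrac{1}{8}\sum_G |p-\eta_G|$ over the eight equally weighted cells and evaluate at $p=0.384$ (any $p\in[0.384,0.576]$ gives $0.148$). For the independent case the paper only verifies feasibility of the exhibited classifier and asserts optimality ``by basically the same argument''; the lower bound it gestures at is the projection onto the two-cell partition $\{A_1,\bar{A}_1\}$: since $|\P(\hat Y\mid A_1)-\P(Y\mid A_1)|$ is at most the average of $|p_G-\eta_G|$ over the four cells inside $A_1$, the DP constraint $\P(\hat Y\mid A_1)=\P(\hat Y\mid \bar{A}_1)=c$ forces the error to be at least $\tfrac12|c-0.6|+\tfrac12|c-0.4|\geq 0.1$ by the triangle inequality. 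Your symmetrization-plus-LP route proves the same bound more explicitly: the symmetrization step is valid (convex objective, permutation-invariant convex feasible set), the collapse of all three constraints to $p_3+p_2=p_1+p_0$ is correct, and the slack-variable coefficient comparison genuinely certifies that all $0.8$ of required movement goes into $t_3$ and $s_0$, giving cost $0.8/8=0.1$, with the paper's point $t_3=s_0=0.4$ lying in your optimal family. What your route buys is an explicit certificate that no trading between the middle cohorts helps, which the paper never establishes; what the paper's implicit route buys is a two-line bound. Two small remarks: $t_3=s_0=0.4$ is the symmetric choice, not the midpoint of $[0.056,0.8]$ (which is $0.428$), so drop the word ``midpoint''; and your maximal-coupling lemma justifying error $=\sum_G\P(G)|p_G-\eta_G|$ is tight only when the non-sensitive features are rich enough to realize the maximal coupling within each cell --- the paper silently adopts the same interpretation of ``error,'' so this is a shared looseness rather than a gap in your argument.
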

Interestingly, even though $\P(Y\mid A_1, A_2, A_3) = 0.864$ and $\P(Y\mid \bar{A}_1,\bar{A}_2,\bar{A}_3) = 0.256$ have the highest and lowest probabilities, the reverse is true of the predictor $\hat Y$ -- it sacrifices accuracy on these groups to obtain higher accuracy on mixed positive/complement intersections.

Here we set up and discuss the example in \ref{section:comparison} in more detail. First we begin with a rigorous and more general description of the structure of the example -- here, one can think of a binary attribute as being synonymous with a partition with two sections. The first section corresponds to individuals with a value of 1 for that attribute and the other section to those with a value of 0.
\begin{assumption}[Independence]
\label{assump:ind}
Assume that the binary attributes $A_1, A_2,\ldots, A_M$ and label $Y$
satisfy:
\begin{enumerate}
\item $A_1,\ldots, A_M$ are independent.
\item $A_1,\ldots, A_M$ are independent conditioned on $Y$.
\end{enumerate}
\end{assumption}
In the following, when $A_j$ is used to denote an event inside a 
probability, it refers to the event $\{A_j=1\}$. $\bar{A_j}$ refers to 
the event $\{A_j=0\}$. We also use the notation $A_j = A_j^1$ and 
$\bar{A_j} = A_j^0$.
\begin{proposition}
\label{prop:basic}
For every $j=1,\ldots, M,\;$ define $q_j = P(A_j\mid Y)$ and $a_j = 
P(A_j)$. Then, under Assumption~\ref{assump:ind}, for any index set
$J=\{j_1,j_2,\ldots, j_{J}\}$ and $(b_j)_{j\in J}\in\{0,1\}^{J}$,
\begin{equation*}
P(Y\mid A_j^{b_j},\, j\in J) =
\prod_{k=1}^J \left(\frac{q_{j_k}}{a_{j_k}}\right)^{b_k}
\left(\frac{1-q_{j_k}}{1-a_{j_k}}\right)^{1-b_k}
\end{equation*}
\end{proposition}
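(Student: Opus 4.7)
The plan is to apply Bayes' rule to $P(Y\mid A_j^{b_j},\, j\in J)$ and then factor the numerator via conditional independence given $Y$ and the denominator via marginal independence of $A_1,\ldots, A_M$. Let $E = \bigcap_{k=1}^J \{A_{j_k}=b_k\}$ denote the conditioning event so that the target is $P(Y\mid E)$. By Bayes' rule,
\begin{equation*}
P(Y\mid E) = \frac{P(E\mid Y)\,P(Y)}{P(E)}.
\end{equation*}

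Next, I would expand the numerator using part (2) of Assumption~\ref{assump:ind} (conditional independence of the $A_j$'s given $Y$), obtaining $P(E\mid Y) = \prod_{k=1}^J P(A_{j_k}^{b_k}\mid Y)$, where each factor equals $q_{j_k}^{b_k}(1-q_{j_k})^{1-b_k}$ by the definition of $q_j$. Similarly, part (1) gives $P(E) = \prod_{k=1}^J P(A_{j_k}^{b_k}) = \prod_{k=1}^J a_{j_k}^{b_k}(1-a_{j_k})^{1-b_k}$. Taking the ratio factor-by-factor yields
\begin{equation*}
P(Y\mid E) = P(Y)\prod_{k=1}^J \left(\frac{q_{j_k}}{a_{j_k}}\right)^{b_k}\left(\frac{1-q_{j_k}}{1-a_{j_k}}\right)^{1-b_k},
\end{equation*}
matching the claimed formula (with the leading $P(Y)$ factor, which is $1/2$ in the example and appears implicit in the proposition statement).

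There is no real obstacle here; the result is essentially a restatement of the two independence assumptions through Bayes' rule. The only subtlety is clerical: one must be careful that conditioning on $E$ uses only marginal independence of $A$'s in the denominator (never conditioning on $Y$), while the numerator uses conditional independence given $Y$. A quick sanity check against the example, computing $P(Y\mid A_1, A_2, \bar{A}_3) = 0.5\cdot (1.2)^2(0.8) = 0.576$, confirms the factorization.
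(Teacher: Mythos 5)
Your proof is correct and follows essentially the same route as the paper's: both reduce to the ratio $P(Y,E)/P(E)$ and then factor numerator and denominator using the conditional and marginal independence assumptions respectively (the paper telescopes via the chain rule where you factor in one step, which is a cosmetic difference). You also correctly note that the derivation produces a leading $P(Y)$ factor that is omitted from the proposition statement but present in the paper's own proof and in its use in the example.
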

\begin{proof}
\begin{align*}
P(Y\mid A_{j_1}^{b_1},\ldots, A_{j_J}^{b_J}) &= \frac{P(Y, A_{j_1}^{b_{1}},
\ldots,A_{j_J}^{b_{J}})}{P(A_{j_1}^{b_{1}},\ldots, A_{j_J}^{b_{J}})}
 \\
&= P(Y)\prod_{k=1}^J \frac{P(A_{j_k}^{b_k}\mid Y, A_{j_1}^{b_{1}},\ldots,
A_{j_{k-1}}^{b_{k-1}})}{P(A_{j_k}^{b_k} \mid A_{j_1}^{b_1},\ldots,
A_{j_{k-1}}^{b_{k-1}})} \\
&= P(Y)\prod_{k=1}^J \frac{P(A_{j_k}^{b_k}\mid Y)}{P(A_{j_k}^{b_k})} \\
&= P(Y)\prod_{k=1}^J \left(\frac{q_{j_k}}{a_{j_k}}\right)^{b_k}
\left(\frac{1-q_{j_k}}{1-a_{j_k}}\right)^{1-b_k}.
\end{align*}
The third line follows by independence, Assumption~\ref{assump:ind}.
\end{proof}
The idea behind the above proposition is that with the independence 
assumption~\ref{assump:ind}, the structure of $P(Y\mid A_1^{b_1},\ldots,
A_M^{b_M})$ is such that we have $P(Y)$ scaled either by $q_j/a_j$ or 
$(1-q_j)/(1-a_j)$ depending on whether we are in $A_j$ or $\bar{A_j}$.
This in a sense makes the effects of protected attributes ``pile on.''
If we assume WLOG that $q_j/a_j\geq 1$, then $(1-q_j)/(1-a_j)\leq 1$.
\begin{example}
\label{ex:3}
Suppose that $M=3,\; P(Y)=0.5$,
 and for every $j=1,2,3,\; a_j = P(A_j) = 0.5$ and 
$q_j = P(A_j\mid Y) = 0.6$. (This is possible because for every $J,\, 
0\leq P(Y\mid A_j,\, j\in J)\leq 1$, aka is a well defined probability.)
Applying Proposition~\ref{prop:basic} noting $\frac{q_j}{a_j} = 1.2,\,
\frac{1-q_j}{1-a_j}=0.8$, 
\begin{align*}
& P(Y\mid A_1)=P(Y\mid A_2) = P(Y\mid A_3) = 0.5\cdot 1.2 = 0.6, \\
& P(Y\mid \bar{A_1})=P(Y\mid \bar{A_2})=P(Y\mid \bar{A_3}) = 0.5\cdot 0.8
= 0.4, \\
& P(Y\mid A_{1}, A_2)=P(Y\mid A_1, A_3) = P(Y\mid A_2, A_3) = 0.5\cdot
(1.2)^2 = 0.72 \\
&\forall 1\leq j,k\leq 3,
 \quad P(Y\mid A_j, \bar{A_k}) = 0.5\cdot1.2\cdot0.8= 0.48 \\
&\forall 1\leq j,k\leq 3,
 \quad P(Y\mid \bar{A_j}, \bar{A_k}) = 0.5\cdot0.8\cdot0.8= 0.32 \\
& P(Y\mid A_1, A_2, A_3) = 0.5\cdot(1.2)^3= 0.864 \\
&\forall 1\leq i,j,k\leq 3,
\quad P(Y\mid A_i, A_j, \bar{A_k}) = 0.5\cdot(1.2)^2\cdot0.8= 0.576 \\
&\forall 1\leq i,j,k\leq 3,
 \quad P(Y\mid A_i, \bar{A_j}, \bar{A_k}) = 0.5\cdot1.2\cdot(0.8)^2= 
0.384 \\
& P(Y\mid \bar{A_1}, \bar{A_2}, \bar{A_3}) = 0.5\cdot(0.8)^3= 0.256 \\
\end{align*}.
\end{example}
\begin{fact}
\label{fact:intopt}
Assuming Assumption~\ref{assump:ind} and the accuracy metric,
 the optimal intersectionally
fair predictor $\hat Y$ assigns the probabilities
\begin{equation*}
\forall b\in\{0,1\}^M,\; P(\hat Y\mid A_1^{b_1},\ldots, A_M^{b_M})
= \wmed_{A}
\left\{P(Y)\prod_{j=1}^M\left(\frac{q_j}{a_j}\right)^{b_j}
\left(\frac{1-q_j}{1-a_j}\right)^{1-b_j}\right\}
\end{equation*}
where the weighted median $\wmed_A$ of a set of $2^M$ numbers 
$\{r_{b^1}\leq \ldots \leq r_{b^{2^M}} : b^i\in\{0,1\}^M\}$ is
\begin{equation*}
r_{b^{i^*}},\; i^* = \min\{i\in\N:\sum_{k\geq i} P(A_1^{b^k_1},\ldots, 
A_M^{b^k_M}) \geq 0.5\}.
\end{equation*}
\end{fact}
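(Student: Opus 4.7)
}
The idea is that intersectional DP collapses the predictor to a single number $c = P(\hat Y = 1 \mid A^b)$, identical across all $b \in \{0,1\}^M$, so the problem reduces to a one-dimensional weighted $L^1$ minimization whose solution is the weighted median. I would first derive the reduction, then apply the classical median characterization, and finally substitute the explicit conditional probabilities from Proposition~\ref{prop:basic}.

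For the reduction: within intersectional group $A^b$, writing $\eta_b = P(Y{=}1\mid A^b)$, the Bayes-optimal classifier respecting the rate constraint $P(\hat Y = 1 \mid A^b) = c$ ranks individuals by $P(Y{=}1\mid X)$ and predicts $1$ on the top $c$-fraction. A max-coupling argument (when $c \le \eta_b$ all positive predictions land on $Y{=}1$ examples, giving accuracy $c + (1-\eta_b)$; symmetrically when $c > \eta_b$, accuracy is $\eta_b + (1-c)$) yields a per-group error of exactly $|c - \eta_b|$. Summing with weights $P(A^b)$ gives the total error $\sum_b P(A^b)\,|c - \eta_b|$.

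Minimizing a weighted sum of absolute deviations in one variable is classical: the optimum is the weighted median of $\{\eta_b\}$ with weights $\{P(A^b)\}$, which follows from the subgradient optimality condition $0 \in \sum_b P(A^b)\,\partial|c-\eta_b|$. Substituting the Proposition~\ref{prop:basic} expression $\eta_b = P(Y)\prod_j (q_j/a_j)^{b_j}((1-q_j)/(1-a_j))^{1-b_j}$, together with the factorized weights $P(A^b) = \prod_j a_j^{b_j}(1-a_j)^{1-b_j}$ that follow from the independence half of Assumption~\ref{assump:ind}, then reproduces the $\wmed_A$ formula stated in the fact.

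The main delicate step is the per-group identity $\mathrm{error}_b = |c - \eta_b|$, which tacitly presupposes enough non-sensitive features inside each intersectional group for the classifier to rank individuals by their Bayes posterior. Without that richness (as in Example~\ref{ex:3}, where only $(A_1,\ldots,A_M)$ are observed), the per-group error instead equals $c(1-\eta_b)+(1-c)\eta_b$, which is linear in $c$ and does not pick out a weighted median. I would resolve this either by making a feature-richness assumption explicit in the statement, or by reinterpreting the ``accuracy metric'' as the $L^1$ calibration objective $\sum_b P(A^b)|c-\eta_b|$; the latter is exactly the quantity that equals $0.148$ at $c = 0.384$ in the example, so it appears to be the intended reading.
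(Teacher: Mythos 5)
Your proposal is correct and follows essentially the same route as the paper's (one-line) proof: both reduce the problem to choosing a single constant $c$ shared across all intersections, write the objective as the weighted sum of absolute deviations $\sum_b P(A_1^{b_1},\ldots,A_M^{b_M})\,|c-\eta_b|$, and invoke the subgradient characterization of the weighted median. Your added observation --- that ``the accuracy metric'' must be read as this $L^1$ calibration objective rather than the misclassification rate $\sum_b P(A^b)\bigl[c(1-\eta_b)+(1-c)\eta_b\bigr]$, which is linear in $c$ and would not select a median unless one assumes within-group features rich enough to make the per-group error exactly $|c-\eta_b|$ --- is a genuine clarification that the paper's sketch glosses over, and it is the reading confirmed by the $0.148$ computation in Fact~\ref{fact:gap}.
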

\begin{proof}[(Proof sketch)]
By thinking about it (or taking subgradient of $\E|Y-\hat Y|$), since 
we have the freedom to pick any constant to be the one to 
assign to every $P(\hat Y\mid A_1^{b_1},\ldots, 
A_M^{b_M})$, we get the weighted median formula.
\end{proof}
\begin{fact}
\label{fact:gap}
In example~\ref{ex:3}, using Fact~\ref{fact:intopt} (an)
optimal intersectionally fair predictor 
assigns $P(\hat Y\mid A_1^{b_1}, A_2^{b_2}, A_3^{b_3}) = 0.384$ and 
has an error of 
\begin{equation*}
\frac{1}{8}\left(|0.864-0.384|+3\cdot|0.576-0.384| + 
|0.256-0.384|\right) = 0.148.\end{equation*}
On the other hand, an optimal independently group fair predictor
assigns
\begin{align*}
& P(Y\mid A_1, A_2, A_3) = 0.5\cdot(1.2)^3= 0.464 \\
&\forall 1\leq i,j,k\leq 3,
\quad P(Y\mid A_i, A_j, \bar{A_k}) = 0.5\cdot(1.2)^2\cdot0.8= 0.576 \\
&\forall 1\leq i,j,k\leq 3,
 \quad P(Y\mid A_i, \bar{A_j}, \bar{A_k}) = 0.5\cdot1.2\cdot(0.8)^2= 
0.384 \\
& P(Y\mid \bar{A_1}, \bar{A_2}, \bar{A_3}) = 0.5\cdot(0.8)^3= 0.656.
\end{align*}
This predictor has an error of $\frac{1}{8}\left(|0.864-0.464|+|0.256-
0.656|\right)=0.1$. This is strictly less than the optimal intersectional
error $0.148$, i.e. there is a gap.
\end{fact}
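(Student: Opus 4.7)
The plan is to reformulate each case as an $L_1$ minimization over the per-subgroup prediction rates $p_G := P(\hat Y = 1 \mid G)$ indexed by the intersectional subgroups $G$, solve explicitly, and then compare. Because every feature vector lies in exactly one such subgroup, the error reduces to $\sum_G P(G)|q_G - p_G| = \tfrac{1}{8}\sum_G |q_G - p_G|$, with $q_G = P(Y = 1 \mid G)$ enumerated in Example~\ref{ex:3}.

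For the intersectional case, DP collapses all $p_G$ to a single constant $p$, so the problem becomes $\min_p \sum_G P(G)|q_G - p|$, solved by the weighted median per Fact~\ref{fact:intopt}. The distinct $q$-values $\{0.864, 0.576, 0.384, 0.256\}$ occur with weights $\{\tfrac{1}{8}, \tfrac{3}{8}, \tfrac{3}{8}, \tfrac{1}{8}\}$, and the $\wmed$ convention of Fact~\ref{fact:intopt} selects $p^* = 0.384$. Three of the eight subgroups then contribute zero error, giving total $\tfrac{1}{8}(|0.864 - 0.384| + 3|0.576 - 0.384| + |0.256 - 0.384|) = \tfrac{1.184}{8} = 0.148$.

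For the independent case, I first rewrite each DP constraint $P(\hat Y \mid A_j) = P(\hat Y \mid \bar{A}_j)$ using the joint independence of $A_1, A_2, A_3$ (Assumption~\ref{assump:ind}); for $j = 1$ this becomes $\sum_{(k_2, k_3) \in \{0,1\}^2} p_{1 k_2 k_3} = \sum_{(k_2, k_3)} p_{0 k_2 k_3}$, and analogously for $j = 2, 3$. I then reduce to symmetric coordinates $p_n$ indexed by the number $n \in \{0, 1, 2, 3\}$ of active attributes: since the objective, distribution, and all three DP constraints are invariant under the $S_3$-action permuting attribute labels, and since the $L_1$ objective is convex, averaging any feasible $\{p_G\}$ over $S_3$ yields a feasible symmetric configuration with no larger objective. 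In these coordinates the three DP constraints collapse to the single equation $p_3 + p_2 = p_1 + p_0$. Applying KKT to $\tfrac{1}{8}(|q_3 - p_3| + 3|q_2 - p_2| + 3|q_1 - p_1| + |q_0 - p_0|)$ with Lagrange multiplier $\lambda$: the $p_3$ and $p_0$ subgradients force $|\lambda| \le 1$, which lies strictly inside the $p_1, p_2$ subgradient intervals $[-3, 3]$, so at an optimum $p_1 = q_1 = 0.384$ and $p_2 = q_2 = 0.576$. The remaining constraint then gives $p_0 - p_3 = q_2 - q_1 = 0.192$, and for any $p_3 \in [q_0 - 0.192, q_3]$ the error evaluates to $\tfrac{1}{8}((q_3 - p_3) + (p_0 - q_0)) = \tfrac{1}{8}(q_3 - q_0 + 0.192) = 0.1$. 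The displayed $(p_3, p_0) = (0.464, 0.656)$ is one such minimizer (the symmetric choice with $p_3 + p_0 = q_3 + q_0$). Since $0.1 < 0.148$, the gap follows.

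The main obstacle is justifying the symmetric ansatz in the independent case. Feasibility under $S_3$-averaging holds because the three DP constraints are linear and jointly $S_3$-invariant under Assumption~\ref{assump:ind}; objective non-increase follows from the triangle inequality for $|\cdot|$ combined with $S_3$-invariance of $q_G$ under the induced action on $G$. One could instead solve the eight-variable $L_1$ problem with three linear equality constraints directly, but the subgradient bookkeeping is substantially more tedious.
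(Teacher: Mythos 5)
Your proposal is correct, and for the independent half it is genuinely more complete than what the paper does. The paper's own proof of this fact only really argues the intersectional half (via the weighted-median characterization, which you reproduce identically, including the convention that the error is measured as $\frac{1}{8}\sum_G |q_G - p_G|$); for the independent case it merely asserts optimality ``by basically the same argument,'' then exhibits the candidate predictor, verifies the three demographic-parity constraints by averaging over cells, and evaluates its error. Strictly speaking the paper thereby proves only an upper bound of $0.1$ on the optimal independent error --- which suffices for the gap claim but not for the stated optimality. Your symmetrization-plus-KKT argument closes that hole: the $S_3$-averaging step is valid because the constraints are linear and jointly invariant and the objective is convex and invariant, the collapse of the three constraints to $p_3+p_2=p_1+p_0$ is a correct computation using $P(A_j)=1/2$ and independence, and the subgradient bookkeeping correctly forces $p_2=q_2$, $p_1=q_1$ and yields the flat optimal value $\frac{1}{8}(q_3-q_0+0.192)=0.1$. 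What the paper's route buys is brevity; what yours buys is an actual proof that $0.1$ is optimal, plus the observation that the displayed predictor is just one point of a segment of minimizers. One small correction: your interval of optimal $p_3$ should be truncated above at $1-0.192=0.808$ rather than $q_3=0.864$, since $p_0=p_3+0.192$ must remain a probability; this does not affect the displayed minimizer $(0.464,0.656)$ or the conclusion.
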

\begin{proof}
By basically the same argument as for the intersectional case, it is optimal
to have $P(\hat Y\mid A_1)=P(\hat Y\mid \bar{A_1})$ be the median of 
$P(Y\mid A_1), P(Y\mid \bar{A_1})$. Now we just need to verify that 
$\hat Y$ as defined above is independently group fair. \\
\begin{align*}
P(Y\mid A_i) &= \frac{1}{4}\left(P(Y\mid A_i, A_j, A_k) + P(Y\mid A_i,
\bar{A_j}, A_k) + P(Y\mid A_i, A_j, \bar{A_k}) + P(Y\mid A_i, \bar{A_j},
\bar{A_k})\right)\\
& = \frac{1}{4}(0.464+2(0.576)+0.384) = 0.5\\
P(Y\mid \bar{A_i}) &= \frac{1}{4}\left(P(Y\mid \bar{A_i}, A_j, A_k)
 + P(Y\mid \bar{A_i}, \bar{A_j}, A_k) + P(Y\mid \bar{A_i}, 
A_j, \bar{A_k}) + P(Y\mid A_i, \bar{A_j}, \bar{A_k})\right)\\
& = \frac{1}{4}(0.576+2(0.384)+0.656) = 0.5.
\end{align*}
Since $i\in\{1,2,3\}$ is arbitrary independent group fairness is 
satisfied.
\end{proof}

\section{Consistency and Generalization}
\newcommand{\cir}{\tikz[baseline={(0,-0.07)}]\draw[black,fill=black] (0,0) circle (.3ex);}
\newcommand{\ohm}[1]{\overline{\hat{\mat{#1}}}}
\begin{customthm}{\ref{thm:plugcon}}
With probability at least $1-\delta$, if projected gradient ascent is run ($\update_t(\blambda, v) = \proj_{[0,B]^J}(\blambda+\eta v)$) for $T$ iterations with step size $\eta= \frac{1}{B\sqrt{T}}$ and for $t=1,\ldots, T,\; h^t= \plugin(\hat\beeta, (\hat\pi_g)_{g\in\gps}, \psi, \Phi)$, letting $\rho = \max\{\|\psi\|_1, \|\phi_1\|_1,
\ldots, \|\phi_J\|_1\}$, then
\begin{align*}
    &\cU_{\psi}(\bar{\vect h}^T) \leq \cU_{\psi}^*
    + \frac{JB}{\sqrt{T}}
    + ((1+J)B+1)\rho\left(4\sqrt{\frac{K^2\log(2n_{\min})}{n_{\min}}} + \sqrt{\frac{\log(2(1+|\gps|)K^2/\delta)}{n_{\min}}}\right) \\
    &\qquad\qquad\qquad\qquad + \E \|\beeta(x)-\hat\beeta(x)\|_1B\left(\rho_{\cX }+\sum_{g\in\gps}+ \frac{\rho_g }{\pi_g}\right) + 2\sqrt{\frac{\log(|\gps|/\delta)}{n}}\sum_{g\in\gps}\frac{\rho_g B}{\pi_g^2}\\
    &\|\cV_{\Phi}(\bar{\vect h}^T)\|_{\infty} \leq \frac{2J}{\sqrt{T}}
    +4(4(1+J)+1)\rho\left(\sqrt{\frac{K^2\log(2n_{\min})}{n_{\min}}} + \sqrt{\frac{\log(2(|1+|\gps|)K^2/\delta)}{n_g}}\right) \\
    &\qquad\qquad\qquad\qquad + 4\E \|\beeta(x)-\hat\beeta(x)\|_1\left(\rho_{\cX}+\sum_{g\in\gps}\frac{\rho_g}{\pi_g}\right) + 8\sqrt{\frac{\log(|\gps|/\delta)}{n}}\sum_{g\in\gps}\frac{\rho_g}{\pi_g^2}.
\end{align*}
\end{customthm}
\begin{proof}
First step is to extract the error incurred by plugging in $\hat\beeta$ rather than $\beeta$. Denoting $\hat h = \plugin(\hat\beeta, (\hat\pi_g)_g, \psi, \Phi, \blambda)$ and $n_g=|\{i:x_i\in g\}|$ so that $\hat\pi_g = \frac{n_g}{n}$,
\begin{equation*}
    \hat h(x) = 
    \argmin_{k\in\{1,\ldots, K\}} \bigg\{\hat\beeta(x)^\top\bigg[\bD + \sum_{l=1}^J \lambda_l\big(\bU_l-\sum_{g\in\gps}\frac{\1_{x\in g}}{\hat\pi_g}\bV_l^g\big)\bigg]\bigg\}_k.
\end{equation*}
Denote $h = \plugin(\beeta, (\pi_g)_g, \psi, \Phi, \blambda)$. We quantify the discrepancy. Define $\hat k = \hat h(x)$ and $k^* = h(x)$. Also, define 
\begin{equation*}
  \vec M =  \bD + \sum_{l=1}^J \lambda_l\bigg(\bU_l-\sum_{g\in\gps}\frac{\1_{x\in g}}{\hat\pi_g}\bV_l^g\bigg).
\end{equation*}
\begin{align*}
    (\beeta(x)^\top \vec M)_{\hat k}
    - (\beeta(x)^\top \vec M)_{k^*} &=
(\hat \beeta(x)^\top \vec M)_{\hat k}
+[(\beeta(x)-\hat\beeta(x))^\top \vec M]_{\hat k}
    - (\beeta(x)^\top \vec M)_{k^*} \\
&\leq (\hat \beeta(x)^\top \vec M)_{k^*}
+[(\beeta(x)-\hat\beeta(x))^\top \vec M]_{\hat k}
    - (\beeta(x)^\top \vec M)_{k^*} + \xi\\
&=(\beeta-\hat\beeta)^\top\vec M(e_{\hat k}-e_{k^*}) +\xi \leq \|\beeta-\hat\beeta\|_1\left(\sum_{g\in\gps}\frac{\rho_g }{\pi_g}+\rho_{\cX}\right)B + \xi
\end{align*}
where $\rho_g = \sum_{l=1}^J\| \bV^g_l\|_{\infty},\, \rho_{\cX} = \|\bD\|_{\infty} + \sum_{l=1}^J \|\bV_l\|_{\infty}$ and $\xi = 2\sqrt{\frac{\log(|\gps|/\delta)}{n}}\sum_{g\in\gps}\frac{\rho_g B}{\pi_g^2}$ -- we are considering the fact that $|\pi_g-\hat\pi_g|\leq \sqrt{\frac{\log(2|\gps|/n)}{n}}$ for every $g\in\gps$ with probability $1-\delta/2$. 
Taking expectation, we arrive at 
\begin{equation}
\label{eq:gen1}
    \cL(\mat C(\hat h), \blambda)-\cL(\mat C(h), \blambda) \leq \E \|\beeta(x)-\hat\beeta(x)\|_1\left(\sum_{g\in\gps}\frac{\rho_g}{\pi_g}+\rho_{\cX}\right)B + 2\sqrt{\frac{\log(|\gps|/\delta)}{n}}\sum_{g\in\gps}\frac{\rho_g B}{\pi_g^2}.
\end{equation}
By standard subgradient descent/online learning analysis, if the stepsize $\eta=  1/(B\sqrt{T})$ is used, 
\begin{equation*}
    \frac{1}{T}\max_{\blambda\in [0,B]^{2M}}
    \sum_{t=1}^T \hat\cL(h^t, \blambda)
    - \frac{1}{T}\sum_{t=1}^T \hat\cL(h^t, \blambda^t) \leq \frac{JB}{\sqrt{T}}
\end{equation*}
because $\cL(h, \cdot)$ is concave and $\sqrt{J}$-Lipschitz (all fairness violations assumed to be in $[0,1]$) and the $\ell_2$ radius of $[0,B]^{J}$ is $\sqrt{J}B$. 

Now we show how good of a saddle point $\left(\frac{1}{T}\sum_{t=1}^T \vect h^t, \frac{1}{T}\sum_{t=1}^T \blambda^t\right)=: (\bar{\vect h}^T, \bar{\blambda}^T)$ for the population problem. By convexity of $\cL$ in the first argument, 
\begin{equation*}
 \frac{1}{T}\max_{\blambda\in [0,B]^{M}}
    \sum_{t=1}^T \hat\cL(h^t, \blambda) 
\geq \max_{\blambda\in[0,B]^{M}} \hat\cL(\bar{\vect h}^T, \blambda).
\end{equation*}
Using equation~\ref{eq:gen1} and the fact that $h^t$ is the minimizer of $\cL(\mat C[h], \blambda^t)$, but using $\hat\beeta$ instead of $\beeta$, 
\begin{align*}
    \frac{1}{T}\sum_{t=1}^T \hat\cL(h^t, \blambda^t)& \leq 
     \frac{1}{T}\sum_{t=1}^T \cL(h^t, \blambda^t) + \hat\cL(h^t, \blambda^t) - \cL(h^t, \blambda^t)\\
     &\leq \frac{1}{T}\sum_{t=1}^T \min_{h:\cX\to[0,1]} \cL(h, \blambda^t) +
     \hat\cL(h^t, \blambda^t) - \cL(h^t, \blambda^t)\\
     &\qquad\qquad\qquad
     +B(\rho_{\cX}+ \sum_{g\in\gps}\frac{\rho_g }{\pi_g})\E\|\beeta(x)-\hat\beeta(x)\|_1 + \xi\\
     &\leq \min_{ h:\cX\to[0,1]}\cL( h, \overline{\blambda}^T) + 4(1+J)B\rho\left(\sqrt{\frac{K^2\log(K)\log(2n_{\min})}{n_{\min}}} + \sqrt{\frac{\log(2(|1+|\gps|)K^2/\delta)}{n_{\min}}}\right)\\
     &\qquad\qquad\qquad
     +B(\rho_{\cX} \sum_{g\in\gps}\frac{\rho_g }{\pi_g})\E\|\beeta(x)-\hat\beeta(x)\|_1 + \xi
\end{align*}
where the middle term is from Lemma~\ref{lem:gen}. Let us absorb the error terms into $\gamma$. Now we can write:
\begin{equation*}
    \max_{\blambda\in[0,B]^{J}} \hat\cL(\bar{\vect h}^T, \blambda) - 
    \min_{h:\cX\to[0,1]} \cL(h, \overline{\blambda}^T) 
    \leq \frac{JB}{\sqrt{T}} + \gamma.
\end{equation*}
Letting $(\vect h^*, \blambda^*)$ be primal dual optimal, we have 
\begin{equation}
\label{eq:gen2}
   \forall \blambda\in[0,B]^{K},\quad \cL(\vect h^*, \blambda^*) \geq \hat\cL(\bar{\vect h}^T, \blambda) - \frac{JB}{\sqrt{T}} - \gamma.
\end{equation}
The choices $\blambda=0$ and $\blambda=\blambda^* + \frac{B}{2}e_{g_m, \cir}$ give
\begin{align*}
    &\hat\cU(\bar{\vect h}^T) 
    \leq \cU(\vect h^*) + \gamma + \frac{JB}{\sqrt{T}}\\
    &\hat\cV(\bar{\vect h}^T)_k \leq \frac{2}{B}\left(\frac{JB}{\sqrt{T}} + 2\gamma\right).
\end{align*}
By Lemma~\ref{lem:gen}
\begin{equation*}
\forall g\in\gps,\quad \sup_{h\in\cH^{plg}}\|\mat C^g[h] - \hat{\mat C}^g[h]\|_{\infty} \leq 4\sqrt{\frac{K^2\log(2n_g)}{n_g}} + \sqrt{\frac{\log(2(|1+|\gps|)K^2/\delta)}{n_g}} =: \zeta(n_g).
\end{equation*}
we have that with probability $\geq 1-\delta$
\begin{align*}
    &\cU(\bar{\vect h}^T) \leq \cU(\vect h^*)+ \gamma + \frac{JB}{\sqrt{T}}+\rho\zeta(n_{\min}) \\
    &\cV(\bar{\vect h}^T)_k \leq \frac{2}{B}\left(\frac{JB}{\sqrt{T}} + 2\gamma\right)
    + \rho\zeta(n_{\min}).
\end{align*}
Therefore we obtain the bounds 
\begin{align*}
    &\cU_{\psi}(\bar{\vect h}^T) \leq \cU_{\psi}^*
    + \frac{JB}{\sqrt{T}}
    + ((1+J)B+1)\rho\left(4\sqrt{\frac{K^2\log(2n_{\min})}{n_{\min}}} + \sqrt{\frac{\log(2(1+|\gps|)K^2/\delta)}{n_{\min}}}\right) \\
    &\qquad\qquad\qquad\qquad + \E \|\beeta(x)-\hat\beeta(x)\|_1B\left(\rho_{\cX }+\sum_{g\in\gps}+ \frac{\rho_g }{\pi_g}\right) + 2\sqrt{\frac{\log(|\gps|/\delta)}{n}}\sum_{g\in\gps}\frac{\rho_g B}{\pi_g^2}\\
    &\|\cV_{\Phi}(\bar{\vect h}^T)\|_{\infty} \leq \frac{2J}{\sqrt{T}}
    +4(4(1+J)+1)\rho\left(\sqrt{\frac{K^2\log(2n_{\min})}{n_{\min}}} + \sqrt{\frac{\log(2(|1+|\gps|)K^2/\delta)}{n_g}}\right) \\
    &\qquad\qquad\qquad\qquad + 4\E \|\beeta(x)-\hat\beeta(x)\|_1\left(\rho_{\cX}+\sum_{g\in\gps}\frac{\rho_g}{\pi_g}\right) + 8\sqrt{\frac{\log(|\gps|/\delta)}{n}}\sum_{g\in\gps}\frac{\rho_g}{\pi_g^2}.
\end{align*}
\end{proof}

\section{Estimators}
In this section, we give plugin and weighted ERM methods of solving the linear probabilistic minimization problems arising from the Lagrangian of our fairness problem.  For clarity, we go over the choices of cost and constraint matrices corresponding to what we use in our experiments.

In our experiments, we maximize accuracy while enforcing independent demographic parity constraints and group-weighted gerrymandering demographic parity constraints. Under the framework of our probabilistic optimization problem, the former corresponds to the choice $\gps = \cG_\text{independent}$, and $\Phi$ containing the $2|\cG_\text{independent}| = 4M$ constraints 
\begin{equation*}
\forall\, g\in \cG_\text{independent},\,
\pm (\bC^g_{+,1}-\bC_{+,1}) \leq \nu,
\end{equation*}
where the $+$ subscript denotes summing over indices $0,1$ in place of $+$. I.e. for $g\in\cG_\text{indepdendent}$, $\bV_{g,\pm}^g = \pm\begin{bmatrix}0 & 1 \\ 0 & 1 \end{bmatrix}$,\, $\bV_{g,\pm}^{g'} = \boldsymbol{0}$ for $g\neq g'$, $\bU_{g,\pm} = \pm \begin{bmatrix} 0&1\\ 0 & 1 \end{bmatrix}$. $\bD = \begin{bmatrix} 0 & 1 \\ 1 & 0\end{bmatrix}$.\\ 
The latter corresponds to the choice $\gps=\cG_\text{gerrymandering}$, and the $2|\cG_\text{gerrymandering}|$ constraints 
\begin{equation*}
    \forall g\in \cG_\text{gerrymandering},\, 
    \pm \P(g)(\bC_{+,1}^g - \bC_{+,1})\leq \nu.
\end{equation*}
This corresponds to, for $g\in\cG_\text{gerrymandering},\,$ $\bV_{g,\pm}^g = \pm\P(g)\begin{bmatrix}0 & 1 \\ 0 & 1 \end{bmatrix}$,\, $\bV_{g,\pm}^{g'} = \boldsymbol{0}$ for $g\neq g'$, $\bU_{g,\pm} = \pm\P(g)\begin{bmatrix} 0&1\\ 0 & 1 \end{bmatrix}$.
The $\P(g)$'s will cancel out with the $\P(g)$'s in the expressions below.
\subsection{Plugin Estimator}
Using linearity of $\psi$ and $\phi$, if $\eta$ is known, the population minimizer $h^* = \argmin_{h:\cX\to[K]} \cL(h, \lambda)$ is deterministic and has a convenient closed form solution (the same is true of any linear minimization). 
\begin{align*}
\cL(h,\lambda)&=\langle\bD+\sum_{l=1}^L\lambda_l\bU_l, \C[h]\rangle-\sum_{g\in \gps}\sum_{l=1}^L \lambda_l\ip{\bV_l^g, \vec C^g[h]}
\\
&= 
\E\bigg\{\langle\bD+\sum_{l=1}^L\lambda_l\bU_l, \beeta(x)\vect h(x)^\top\rangle-\sum_{g\in \gps}\sum_{l=1}^L \lambda_l\big\langle\bV_l^g, \frac{\1_{\{x\in g\}}}{\P(g)}\beeta(x)\vect h(x)^\top\big\rangle\bigg\}\\
&=\E\beeta(x)^\top\big[\bD + \sum_{l=1}^L \lambda_l\bigg(\bU_l-\sum_{g\in\gps}\frac{\1_{x\in g}}{\P(g)}\bV_l^g\bigg)\big]\vect h(x).
\end{align*}
where we noticed that the conditional group confusion equals $\vec C^g[h] = \E\1_{\{x\in g\}}\beeta(x)\vect h(x)^\top/\P(g)$. Denote $\pi_g = \P(g)$ for $g\in\gps$ as the group probabilities. Thus, the minimizer has the deterministic form 
\begin{align}
    h^*(x) &= \argmin_{k\in\{1,\ldots, K\}} \bigg\{\eta(x)^\top\big[\bD + \sum_{l=1}^L \lambda_l\bigg(\bU_l-\sum_{g\in\gps}\frac{\1_{x\in g}}{\P(g)}\bV_l^g\bigg)\big]\bigg\}_k.
    \label{eq:plugsol}
\end{align}
Finally, since we do not actually have access to the true $\beeta$, we replace $\beeta$ with an estimated $\hat\beeta$. 

\subsection{Weighted ERM}
\label{section:werm-appendix}
In the weighed ERM approach (referred to as cost-sensitive classification for the binary case~\citep{agarwal18}) we parametrize $h:\cX \to[K]$ by a function class $\cF$ of functions $\bf:\cX\to\R^K$. The classification is the argmax of the predicted vector, $h(\bx) = \argmax_j(\bff(\bx)_j)$, so we denote the set of classifiers as $\cH^{werm} = \argmax\circ\cF$. For a standard classification problem with 0-1 error, minimizing the dataset error $\wh{\mathrm{err}}[h] = \frac{1}{n}\sum_{i=1}^n \1_{\{h(\bx_i)\neq y_i\}}$ is done by minimizing a surrogate loss $\ell:\R^K\times[K]\to\R_+$, e.g., using softmax cross-entropy, over the dataset, as $\hat\E\ell(\bff(\bx), y) = \frac{1}{n}\sum_{i=1}^n \ell(\bff(\bx_i), y_i)$. Then we take $h = \argmax\circ f$. 

Let $\ell(\bs)\in\R^k$ be the vector $\ell(\bs)_k = \ell(\bs,k)$.

In an analogous manner, we would like to minimize the empirical metric defined by the Lagrangian using a surrogate loss, as
\begin{align*}
    \min_{h\in\cH^{werm}} \hat\cL(h, \blambda)
    &= \sum_{i=1}^n e_{y_i}^\top\bigg[\frac{1}{n}\bD + \sum_{l=1}^L \frac{\lambda_l}{n}\bigg(\bU_l-\sum_{g\in\gps}\frac{\1_{x_i\in g}}{n_g}\bV_l^g\bigg)\bigg]\vect h(x_i).
\end{align*}
where $n_g = |\{i: x_i\in g\}|,\, g\in\gps$ are the empirical sizes of each group. Notice it has the form
\begin{equation*}
    \min_{h\in\cH^{werm}} \sum_{i=1}^n \bw_i^\top \vect h(x_i)
    = \sum_{i=1}^n s(\bw_i)\frac{\bw_i^\top}{s(\bw_i)} \vect h(x_i), \qquad s(\bw_i) = \frac{1}{n-1}\sum_{k=1}^K (\bw_i)_k.
\end{equation*}
If we interpret $\vec 1-\frac{\bw_i}{s(\bw_i)}$ as a probability distribution over labels and $s(\bw_i)$ as its weight, then we have $\min_h\tilde{\mathbb E}[(\vec 1-\tilde\beeta(x))^\top \vect h(x)]$ where $\tilde\P(x_i) = \frac{s(\bw_i)}{\sum_{i=1}^n s(\bw_i)}$ and $\tilde\beeta(x_i) =\vec 1- \frac{\bw_i}{s(\bw_i)}$. 

A priori, $\frac{\max_k (\bw_i)_k}{s(\bw_i)} \leq 1$, i.e. $\frac{\max_k (\bw_i)_k}{\sum_{k=1}^K (\bw_i)_k} \leq \frac{1}{n-1}$, may not hold. But, since shifting each entry of $w_i$ by the same amount does not change the initial optimization problem, we can add the constant amount $(n-1)\max_k (w_i)_k - \sum_{k=1}^K (w_i)_k$ to each entry of $w_i$, after which $\frac{\bw_i}{s(\bw_i)}\leq\vec 1$.

If $\ell$ is a surrogate loss used to minimize the multiclass error, it is assumed that we can minimize $\E[(1-\beeta(x))_{h(x)}]$ if we minimize $\E[\beeta(x)^\top \ell(f(x))]$ and take $h=\argmax\circ f$. Therefore, we can solve the weighted version by minimizing reweighted surrogate loss:
\begin{equation}
   \min_{f\in\cF} \tilde \E[\tilde\beeta(x)^\top\ell(f(x))] \equiv 
   \min_{f\in\cF} \sum_{i=1}^n s(\bw_i)\left(\vec1-\frac{\bw_i}{s(\bw_i)}\right)^\top \ell(f(x)) =: \hat L(f).
   \label{eq:rwl}
\end{equation}
This provides a convex surrogate for the original problem of minimizing the empirical Lagrangian.

\begin{lemma}[Confusion matrix generalization]
\label{lem:gen}
Denote $n_g$ as the number of samples belonging to group $g$ for $g\in\gps\cup\{\cX\}$.  Then with probability at least $1-\delta$, 
\begin{align*}
    \forall g\in \gps\cup\{\cX\},\; \sup_{h\in\conv\cH} \|\mat C^g[h] - \wh{\mat C}^g[h]\|_{\infty} \leq 4\sqrt{\frac{\vc(\cH)\log(n_g+1)}{n_g}} + \sqrt{\frac{\log((1+|\gps|)K^2/\delta)}{n_g}}.
\end{align*}
\end{lemma}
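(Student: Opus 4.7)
The plan is to reduce the bound to a standard VC-type uniform concentration inequality for each fixed group $g\in\gps\cup\{\cX\}$ and each fixed coordinate $(k,\ell)\in[K]\times[K]$, then combine via a union bound and lift from $\cH$ to $\conv(\cH)$ by linearity. The starting observation is that each entry of the group confusion matrix is the mean of a $[0,1]$-valued function of the sample: conditional on $X\in g$,
\[
\mat{C}^g_{k,\ell}[h]=\mathbb{E}\bigl[\mathbf{1}_{\{Y=k\}}\,h_\ell(X)\,\bigm|\,X\in g\bigr],
\]
while $\wh{\mat C}^g_{k,\ell}[h]=\frac{1}{n_g}\sum_{i\in g}\mathbf{1}_{\{y_i=k\}}h_\ell(x_i)$ is the empirical mean over the $n_g$ samples falling in $g$; conditional on $n_g$, these samples are i.i.d.\ from $\mathbb{P}(\cdot\mid X\in g)$. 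The quantity to control is then the supremum of a centered empirical process indexed by the function class $\cF_{k,\ell}=\{(x,y)\mapsto \mathbf{1}_{\{y=k\}}\,h_\ell(x):h\in\cH\}$, whose VC-subgraph dimension is at most $\vc(\cH)$, since multiplying by the indicator $\mathbf{1}_{\{y=k\}}$ only restricts the effective sample space and the coordinate projection $h\mapsto h_\ell$ is precisely what $\vc(\cH)$ is defined on.

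Next, I would combine the shatter-coefficient bound $S(\cF_{k,\ell},n_g)\le (n_g+1)^{\vc(\cH)}$ with symmetrization and Massart's finite class lemma to obtain, in expectation, $\mathbb{E}\sup_{h\in\cH}|\mat{C}^g_{k,\ell}[h]-\wh{\mat C}^g_{k,\ell}[h]|\le C\sqrt{\vc(\cH)\log(n_g+1)/n_g}$; this is essentially the VC inequality in Boucheron, Bousquet and Lugosi (2005) and is where the leading constant $4$ in the statement originates. I would then apply McDiarmid's inequality to this supremum (whose bounded-differences constant is $1/n_g$) to deduce that with probability at least $1-\delta'$ it exceeds its mean by at most $\sqrt{\log(1/\delta')/(2n_g)}$. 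Taking $\delta'=\delta/((1+|\gps|)K^2)$ and performing a union bound over the $(1+|\gps|)K^2$ choices of $(g,k,\ell)$ gives the claimed high-probability bound uniformly over $\cH$.

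To lift from $\cH$ to $\conv(\cH)$, I would use linearity of both the population and empirical confusions in $h$: if $h=\sum_j\alpha_j h_j$ with $h_j\in\cH$, $\alpha_j\ge 0$, $\sum_j\alpha_j=1$, then the triangle inequality gives
\[
|\mat{C}^g_{k,\ell}[h]-\wh{\mat C}^g_{k,\ell}[h]|\le \sum_j\alpha_j\,|\mat{C}^g_{k,\ell}[h_j]-\wh{\mat C}^g_{k,\ell}[h_j]|\le \sup_{h'\in\cH}|\mat{C}^g_{k,\ell}[h']-\wh{\mat C}^g_{k,\ell}[h']|,
\]
so $\sup_{h\in\conv(\cH)}\|\mat{C}^g[h]-\wh{\mat C}^g[h]\|_\infty=\sup_{h\in\cH}\|\mat{C}^g[h]-\wh{\mat C}^g[h]\|_\infty$, and the previous bound transfers verbatim.

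The main obstacle I expect is bookkeeping rather than anything conceptual: first, handling the random group sizes $n_g$ cleanly by conditioning on $(n_g)_{g\in\gps}$ (permissible because the statement is already phrased in terms of the realized $n_g$, and on that event the intra-group samples are i.i.d.\ from the conditional law), and second, picking a version of the VC/symmetrization inequality whose constants line up with the exact $4\sqrt{\vc(\cH)\log(n_g+1)/n_g}$ and $\sqrt{\log((1+|\gps|)K^2/\delta)/n_g}$ in the claim. The remainder is a routine combination of symmetrization, the Sauer--Shelah shatter-coefficient bound, McDiarmid concentration, a union bound, and convex-hull linearity.
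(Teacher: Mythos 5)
Your proposal is correct and follows essentially the same route as the paper, which simply invokes the standard VC generalization bound (citing Boucheron, Bousquet and Lugosi) per group and per confusion-matrix entry and then union-bounds over the $(1+|\gps|)K^2$ choices of $(g,k,\ell)$. Your write-up just makes explicit the ingredients the paper leaves implicit (symmetrization, Sauer--Shelah, McDiarmid, the conditioning on realized group sizes, and the convex-hull lift by linearity), all of which are sound.
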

\begin{proof}
By standard binary classification generalization \citep{2005bouchsurvey}, with probability at least $1-\delta$,
\begin{align*}
    &\sup_{h\in\conv\cH}\left|P(Y=i, h(X)=j\mid g)-\hat P(Y=i, h(X)=j\mid g)\right|\\ &\qquad\qquad\qquad\qquad\qquad\leq 4\sqrt{\frac{\vc(\cH)\log(n_g+1)}{n_g}}
    + \sqrt{\frac{\log(1/\delta)}{n_g}}.
\end{align*}
Then we take a union bound over $|\gps|$ confusion matrices and $K^2$ entries per confusion matrix.
\end{proof}

\begin{theorem}
\label{thm:saddle}
Suppose $\psi : [0,1]^{K\times K}\to [0, 1]$ and $\Phi:[0,1]^{K\times K}\times ([0,1]^{K\times K})^{\gps}\to [0,1]^L$ are $\rho$-Lipschitz w.r.t. $\|\cdot\|_{\infty}$.
Recall $\hat\cL(\vect h,\blambda)=\hat\cE(\vect h)+\blambda^\top(\hat\cV(\vect h)-\ve\vec 1)$. Let $\gamma$ denote the bound in Lemma~\ref{lem:gen} that applies to $\C$,  $\gamma_g$ the bound that applies to $\C^g$, and denote $\gamma_{\gps} = \max_{g\in\gps} \gamma_g$.  If $\ve \geq \rho \gamma$ then with probability $1-\delta$:

If $(\bar{\vect h}, \bar\blambda)$ is a $\nu$-saddle point of $\max_{\blambda\in[0,B]^L}\min_{\vect h\in\conv\cH} \hat\cL(\vect h, \blambda)$, in the sense that $\max_{\blambda\in[0,B]^L} \hat\cL(\bar{\vect h}, \blambda)-\min_{\vect h\in\conv\cH}\hat\cL(\vect h, \bar\blambda)\leq\nu$, and $\vect h^*\in\conv\cH$ satisfies $\cV(\vect h^*)\leq 0$, then
\begin{align}
    & \cE(\bar{\vect h})\leq \cE(\vect h^*)+\nu+2\rho\gamma \label{eq:saddle1}\\
    & \|\cV(\bar{\vect h})\|_{\infty}\leq \frac{1+\nu}{B}+\rho\gamma_{\gps} +\ve. \label{eq:saddle2}
\end{align}
\end{theorem}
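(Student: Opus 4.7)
The plan is to derive both inequalities from the defining relation of a $\nu$-saddle point, which unrolls to
\[
\hat\cL(\bar{\vect h}, \blambda) \le \hat\cL(\vect h, \bar\blambda) + \nu \quad \text{for all } \blambda\in[0,B]^L \text{ and } \vect h\in\conv\cH,
\]
and then to plug in two specific choices on each side. The first task is uniform concentration of confusion matrices: Lemma~\ref{lem:gen}, combined with the $\rho$-Lipschitz continuity of $\psi$ and $\Phi$ in $\|\cdot\|_\infty$, yields, with probability $1-\delta$,
\[
\sup_{\vect h\in\conv\cH}\bigl|\hat\cE(\vect h)-\cE(\vect h)\bigr|\le \rho\gamma \quad \text{and} \quad \sup_{\vect h\in\conv\cH}\bigl\|\hat\cV(\vect h)-\cV(\vect h)\bigr\|_\infty\le \rho\gamma_\gps.
\]
Combined with the hypothesis $\ve \ge \rho\gamma$ and the feasibility $\cV(\vect h^*)\le \mathbf 0$, this shows that $\hat\cV(\vect h^*)-\ve\vec 1\le \mathbf 0$ entrywise. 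Consequently, for every $\bar\blambda\in[0,B]^L$,
\[
\hat\cL(\vect h^*,\bar\blambda) = \hat\cE(\vect h^*) + \bar\blambda^{\top}\bigl(\hat\cV(\vect h^*)-\ve\vec 1\bigr) \le \hat\cE(\vect h^*),
\]
which is the workhorse upper bound we will plug into the right-hand side of the saddle inequality.

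For the performance bound \eqref{eq:saddle1}, I would set $\blambda=\mathbf 0$ on the left and $\vect h=\vect h^*$ on the right of the saddle inequality, giving $\hat\cE(\bar{\vect h})\le \hat\cE(\vect h^*)+\nu$. Converting both sides to the population via $|\hat\cE-\cE|\le \rho\gamma$ then yields $\cE(\bar{\vect h})\le \cE(\vect h^*)+\nu+2\rho\gamma$.

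For the fairness bound \eqref{eq:saddle2}, I would, for each coordinate $j$, plug $\blambda=Be_j$ into the left side and again $\vect h=\vect h^*$ on the right. Writing out the Lagrangian gives
\[
\hat\cE(\bar{\vect h})+B\bigl(\hat\cV(\bar{\vect h})_j-\ve\bigr)\le \hat\cE(\vect h^*)+\nu \le 1+\nu,
\]
where the last inequality uses that $\psi$ takes values in $[0,1]$. Dropping the nonnegative $\hat\cE(\bar{\vect h})$ term and dividing by $B$ gives $\hat\cV(\bar{\vect h})_j\le \tfrac{1+\nu}{B}+\ve$; maximizing over $j$ and lifting to the population through $\|\hat\cV-\cV\|_\infty\le \rho\gamma_\gps$ delivers $\|\cV(\bar{\vect h})\|_\infty\le \tfrac{1+\nu}{B}+\ve+\rho\gamma_\gps$.

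The argument is almost entirely bookkeeping: the only nontrivial input is the generalization guarantee of Lemma~\ref{lem:gen}, and the only conceptual subtlety is the role of the buffer $\ve$. It is precisely the condition $\ve\ge\rho\gamma$ that upgrades population feasibility $\cV(\vect h^*)\le\mathbf 0$ to empirical feasibility $\hat\cV(\vect h^*)-\ve\vec 1\le\mathbf 0$ with high probability, which in turn allows us to discard the $\bar\blambda^{\top}(\cdot)$ term on the right-hand side of the saddle inequality regardless of how $\bar\blambda$ was chosen. Without this slack the reference comparison at $\vect h^*$ would fail, and the saddle argument would not close.
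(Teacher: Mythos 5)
Your proof is correct and follows essentially the same route as the paper's: the same two dual choices $\blambda=\mathbf 0$ and $\blambda=Be_j$ tested against the comparison point $\vect h^*$, the same use of Lemma~\ref{lem:gen} to pass between empirical and population quantities, and the same role of the buffer $\ve$ in upgrading $\cV(\vect h^*)\le\mathbf 0$ to empirical feasibility so the $\bar\blambda^{\top}(\cdot)$ term can be dropped. (You inherit the paper's own minor imprecision of invoking $\ve\ge\rho\gamma$ where $\ve\ge\rho\gamma_{\gps}$ is what is strictly needed, since $\cV$ concentrates at the group-level rate $\gamma_{\gps}$ rather than $\gamma$.)
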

Thus, as long as we can find an arbitrarily good saddle point, which follows from weighted ERM if $\cH^{werm}$ is expressive enough while having finite VC dimension, then we obtain consistency.
\begin{proof}
    By Lemma~\ref{lem:gen}, with probability $1-\delta,$
    \begin{equation}
    \label{eq:saddle3}
    |\cE(\vect h)-\hat\cE(\vect h)| \leq \rho\gamma, \qquad  \|\cV(\vect h)-\hat\cV(\vect h)\|_{\infty}\leq \rho\gamma_{\gps}.
    \end{equation}
    Therefore, $\hat\cV(\vect h^*)\leq \ve$. Using this feasibility to argue the first inequality below:
\begin{align*}
\hat\cE(\bar{\vect h})-\hat\cE(\vect h^*)\leq \hat\cE(\bar{\vect h}) - 
\hat\cL(\vect h^*, \bar\blambda) = \hat\cL(\bar{\vect h}, 0)-\hat\cL(\vect h^*, \bar\blambda)\leq \nu.
\end{align*}
Then \eqref{eq:saddle1} follows from \eqref{eq:saddle3} and triangle inequality. For the next part,
\begin{equation*}
    B(\hat\cV(\bar{\vect h})_k-\ve) = \hat\cL(\bar{\vect h}, Be_k) - \hat\cL(\vect h^*,\bar\blambda) + \hat\cE(\vect h^*)-\hat\cE(\vect h)\leq \nu+1.
\end{equation*}
This and \eqref{eq:saddle3} imply \eqref{eq:saddle2}.
\end{proof}

\section{Datasets}
Here we dicsuss the datasets used and additional experimental details. 

\textbf{Communities and Crime:} contains neighborhoods featurized by various statistics pertaining to the neighborhoods, e.g. percent employed in various professions, demographics, rent, etc. The label is whether there is a high ($>70\%$-ile) rate of violent crimes per capita. There are $n=1994$ samples and $N=12$ protected attributes comprising various racial statistics.\\
\textbf{Adult census:} contains census data for $n=2020$ individuals. The label is whether an individual has high income. $N=7$ protected attributes comprising age, sex, and different races.\\
\textbf{German credit:} \citep{Dua2019} contains features such as financial holdings, occupation, housing, and reason for purchases, and the goal is to predict whether an individual has good credit. Several categorical variables were converted to one-hot encodings. There are $n=1000$ examples and $N=3$ protected attributes corresponding to age, sex, and foreign worker status.\\
\textbf{Law school:} contains $n=1823$ students and their gpas, cluster, and LSAT score. The goal is to predict whether the student passes the bar, and the protected attributes are age, gender, and family income.

For the constraint level $\nu$ we vary according a logarithmically spaced grid from $0.001$ to 1 with 20 points. We set $B=50$ for the \name\ methods. We vary the regularization parameter $\rho$ from $0.01/M$ to $1000/M$ across a logarithmically spaced grid with 20 points.

The authors of \citep{Kearns18} apply fictitious play to the gerrymandering problem, searching for the most violated constraint $\max_{g\in\gps} \frac{n_g}{n}|\bC^g_{0,1}+\bC^g_{1,1}-\bC_{0,1}-\bC_{1,1}|$ in response to the average of the predictors computed so far (if the violation exceeds $\nu$), and computing the minimizing predictor in response to the average of the dual variables obtained from the most violated constraints so far. On the other hand, we directly apply our \name\ framework to their original cost function (see\cite{Kearns18}) i.e., the problem of maximizing accuracy subject to $\forall g\in\gps,\, \frac{|g|}{n}|\bC^g_{0,1}+\bC^g_{1,1}-\bC_{0,1}-\bC_{1,1}|\leq \nu$. Both approaches aim to solve this problem.

Here are the full (training in addition to test) plots for the independent and gerrymandering experiments. 
\begin{figure}[th!]
\centering
    \includegraphics[width=\linewidth]{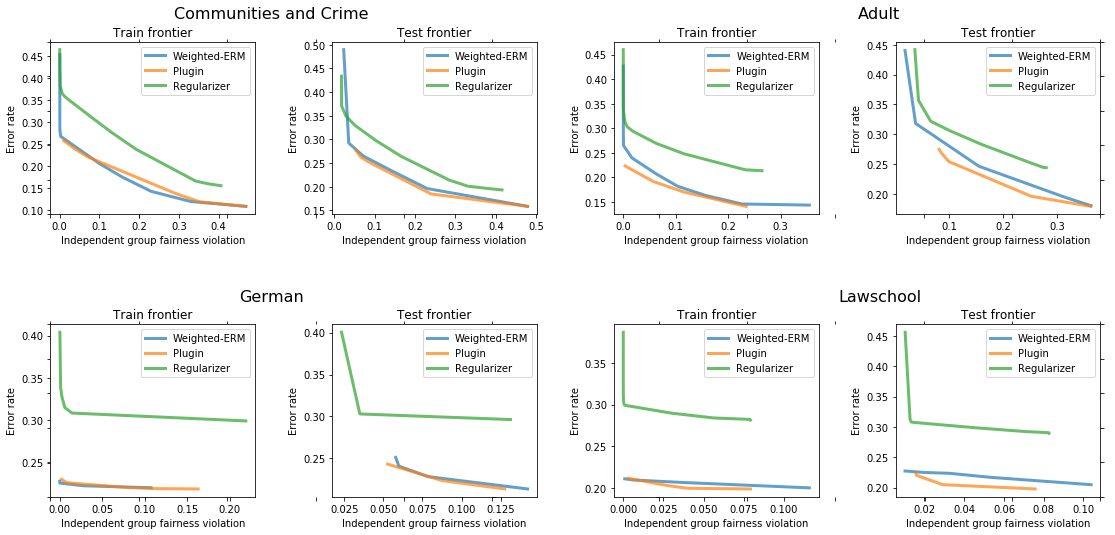}
    \caption{Experiments on independent group fairness. The pareto frontier closest to the bottom left represent the best fairness/performance tradeoff.}
    \label{fig:gerry_two}
\end{figure}

\begin{figure}[th!]
\centering
    \includegraphics[width=0.9\linewidth]{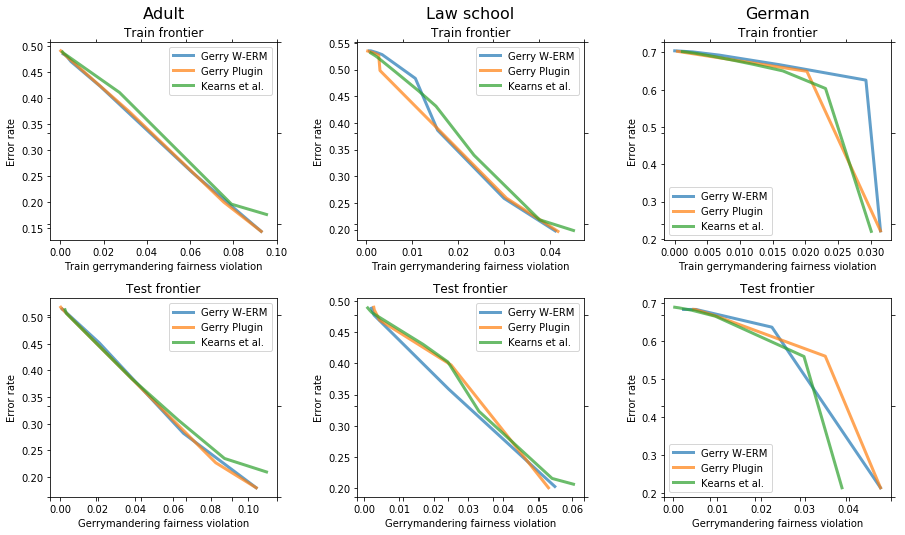}
    \caption{Experiments on gerrymandering group fairness. The pareto frontier closest to the bottom left represent the best fairness/performance tradeoff.}
    \label{fig:gerry_three}
\end{figure}

\end{document}